\DeclareMathOperator*{\argmax}{arg\,max}
\DeclareMathOperator*{\clip}{clip}
\newtheorem{lemma}{Lemma}
\newtheorem{theorem}{Theorem}
\title{Personalized Reinforcement Learning with a Budget of Policies}
\author{
    Dmitry Ivanov,
    Omer Ben-Porat
}
\begin{document}

\maketitle

\begin{abstract}
Personalization in machine learning (ML) tailors models' decisions to the individual characteristics of users. While this approach has seen success in areas like recommender systems, its expansion into high-stakes fields such as healthcare and autonomous driving is hindered by the extensive regulatory approval processes involved. To address this challenge, we propose a novel framework termed represented Markov Decision Processes (r-MDPs) that is designed to balance the need for personalization with the regulatory constraints. In an r-MDP, we cater to a diverse user population, each with unique preferences, through interaction with a small set of representative policies. Our objective is twofold: efficiently match each user to an appropriate representative policy and simultaneously optimize these policies to maximize overall social welfare. We develop two deep reinforcement learning algorithms that efficiently solve r-MDPs. These algorithms draw inspiration from the principles of classic K-means clustering and are underpinned by robust theoretical foundations. Our empirical investigations, conducted across a variety of simulated environments, showcase the algorithms' ability to facilitate meaningful personalization even under constrained policy budgets. Furthermore, they demonstrate scalability, efficiently adapting to larger policy budgets.
\end{abstract}

\section{Introduction}
\label{sec:introduction}

Personalization in machine learning (ML) tailors the decision-making process of a model to align with an individual's unique characteristics and preferences. This approach is typically realized either through individual-specific models or by fine-tuning a universal model with personal data. It is successfully applied in various fields such as recommender systems~\cite{shepitsen2008personalized,lee2012prea,yao2020rlper}, natural language processing \cite{wu2023personalized}, healthcare~\cite{ayer2012or}, and financial services \cite{capponi2022personalized}. For instance, in recommender systems, personalization enables models to suggest products or services to users based on their individual purchase histories and browsing behaviors.

Despite these successes, the integration of personalization in ML into critical sectors like healthcare and autonomous driving, where errors can lead to severe consequences, remains limited. Products driven by ML must undergo extensive regulatory review and approval processes to ensure they offer benefits that significantly outweigh potential risks for their intended user populations. The review process, as exemplified by the Artificial Pancreas that monitors and controls glucose levels \cite{breton2020randomized}, involves a thorough evaluation by regulatory bodies like the Food and Drug Administration (FDA) to affirm the balance of benefits and risks. The FDA's prolonged authorization of a comprehensive Artificial Pancreas solution, spanning several years \cite{jdrf_blog_2022}, underscores the complexity and rigor of such evaluations. Similarly, autonomous vehicles employing reinforcement learning (RL) systems for navigation confront formidable challenges. Despite accumulating millions of hours in test driving, these vehicles must pass meticulous review and audit processes before entering production. The integration of personalization in these systems, necessitating the assessment of individualized policies for safety and efficacy, further complicates the regulatory landscape.

The challenges observed in the aforementioned domains reflect a broader issue: in high-risk and complex environments, the primary obstacle often lies not in data acquisition or hardware limitations, but in the protracted regulatory approval process. This bottleneck necessitates an innovative approach that balances regulatory feasibility with the benefits of personalization. Our proposed solution is to develop a limited number of tailored policies, each catering to a specific user group, thereby streamlining the review process while maintaining the personalization advantage.

In this context, we model our scenario as a Markov decision process (MDP) involving a population of $n$ users, or agents, each characterized by a unique reward function reflecting their preferences within the MDP. Ideally, each agent would be offered a distinct personalized policy. However, given the regulatory constraints highlighted above, we propose a more practical strategy: the development of at most $k<n$ policies. Under this framework, each agent selects the most appropriate policy from this smaller set.

To formalize this concept, we introduce a novel abstraction: the represented MDP (r-MDP). In r-MDP, agents do not directly engage with the MDP. Instead, they are aligned with $k$ representatives, each managing a single policy within the MDP. Agents associated with the same representative adhere to the same policy. The goal is to optimally match agents to representatives and train these representative policies to maximize the overall social welfare of the agents. This approach addresses the regulatory challenges by reducing the number of policies requiring approval, thereby facilitating a more efficient review process without significantly compromising the personalization benefits.

Our proposed pipeline can be summarized in three stages:

\begin{enumerate}
    \item \textbf{Manufacturing}. The $k$ policies are trained in a simulator to jointly maximize the welfare of $n$ agents in an r-MDP. Taking self-driving cars as an example, this stage involves developing $k$ driving policies based on aggregated user preferences (e.g., gathered from surveys). \textit{Our primary focus is on this stage.}
    
    \item \textbf{Assessment}. At this stage, regulatory authorities evaluate the developed policies. The costs incurred here stem from the extensive review process and potential requests for policy modifications. The number of policies, $k$, naturally balances the degree of personalization offered by each policy against the assessment costs.
    
    \item \textbf{Deployment}.  Following successful assessment, the policies are authorized for real-world deployment.
\end{enumerate}

As we discuss later, solving r-MDP directly is intractable. However, we can simplify the problem by separating it into two more manageable sub-problems: optimizing policies given fixed assignments and optimizing assignments given fixed policies. Drawing inspiration from the classic K-means \cite{macqueen1967classification,lloyd1982least} and Expectation-Maximization (EM) \cite{dempster1977maximum}  clustering algorithms, we introduce our first algorithm, which iteratively updates policies and assignments. Moreover, recognizing the differentiability of policy objectives with respect to assignments, we propose our second algorithm employing gradient descent for end-to-end training. We provide theoretical guarantees of monotonic improvement and convergence to a local maximum of social welfare using our algorithms.

Our empirical analysis encompasses Resource Gathering environment \cite{barrett2008learning} and four MuJoCo \cite{todorov2012mujoco} tasks, adapted as r-MDPs. The results consistently demonstrate that our algorithms surpass existing baselines in performance. Notably, we observe that even a limited number of policies can provide significant personalization, highlighting the efficacy of our approach.

\paragraph{Our contributions}

\begin{enumerate}
    \item \textbf{Problem Formulation}: We introduce a nuanced problem formulation in the realm of personalized RL, emphasizing the challenge posed by the resource-intensive review and authorization process for personalized policies.

    \item \textbf{Novel Setting}: We propose the r-MDP framework that addresses the need for a practical compromise between the desire for high personalization and the constraints of expedited regulatory review processes.

    \item \textbf{Efficient Algorithms}: We present two deep RL algorithms, backed by robust theoretical justifications, to approximately solve r-MDPs. These algorithms demonstrate superior performance in achieving personalized outcomes compared to approaches from existing literature.
\end{enumerate}

\paragraph{Limitations}

This study primarily addresses the challenge of training a limited number of policies for a large user base in the Manufacturing stage of our pipeline, leaving the complexities of the Assessment and Deployment stages, such as policy revisions and real-world performance stability, for future exploration. Additionally, our focus on utilitarian social welfare may inadvertently lead to uneven reward distributions between agents. We discuss potential alternatives in Section \ref{sec:problem_represented}. Finally, while we use parameter sharing to enhance sample efficiency, the possibility of further improvements through advanced techniques remains. Nevertheless, given the controlled nature of simulator training, sample efficiency is a secondary concern in our study.

\subsection{Related Work}\label{sec:introduction_related}

Related works in personalization, multiple objectives, and multi-agent systems provide valuable context for our research, yet none directly address the unique challenge of operating within an explicitly constrained policy budget.

\paragraph{RL for personalization}

This field aims at creating tailored RL solutions for individuals or groups. For an in-depth review, see \cite{den2020reinforcement}. A key challenge is personalizing RL policies in real-world applications, especially in healthcare \cite{hassouni2018personalization,zhu2018group,grua2018exploring,el2019end,el2022ph}. While offering a single policy to all users can be suboptimal, training a policy per user can be an inefficient use of collected samples. A common strategy involves clustering users by their behavior to train cluster-specific policies. Unlike these approaches where clustering is driven by sample efficiency, our approach addresses real-world policy implementation costs, with training conducted in a simulator. Still, the trajectory-clustering concept is relevant to our framework and serves as a baseline in our experiments. We also acknowledge works that use external data for clustering \cite{martin2004agentx,goindani2020cluster}, but our methods do not require such data.

Other aspects in RL for Personalization include privacy-respecting data sharing \cite{tabatabaei2018narrowing,baucum2022adapting}, which can be addressed with Federated RL \cite{nadiger2019federated}, and exploration under safety constraints \cite{perkins2002lyapunov,hans2008safe,moldovan2012safe,junges2016safety}. Though significant, these challenges do not align closely with our specific research focus.

\paragraph{Meta-RL}

While Meta-RL aims for policy adaptability to an unlimited number of tasks \cite{finn2017model}, r-MDP imposes a strict constraint on the number of policies. A capable meta-policy could offer a personalized solution to each user in the absence of such a constraint, but optimally choosing a limited subset of policies to meet the needs of all users is a unique challenge of our framework. Note that there exists a potential for synergy: Meta-RL could provide a versatile policy that our algorithms would deploy strategically within the explicit policy budget. This synergy emphasizes that the two frameworks address distinct but potentially complementary aspects of the RL problem space.

\paragraph{Multi-Objective RL (MORL)}

Similarly to our setting, MORL involves optimizing multiple rewards. However, MORL typically focuses on either developing a single policy that balances various objectives or approximating the Pareto front with a potentially large set of policies \cite{hayes2022practical}. While the latter algorithms could technically be adapted to our setting, for instance by selecting $k$ policies from the Pareto set, they only apply to problems with a few reward functions. In contrast, we tackle problems with as many as a thousand reward functions, which underscores the scalability of our framework.

\paragraph{Multi-Agent RL (MARL)}

While superficially similar, MARL differs fundamentally from our framework. MARL involves multiple agents acting and interacting within a shared environment, often formalized as a Markov game \cite{littman1994markov}. In contrast, our framework trains policies that operate in a single-agent environment independently, without inter-policy interaction. This key distinction sets our work apart from the interactive dynamics central to MARL, emphasizing our focus on individual preference optimization.

\section{Background and Problem Setup}
\label{sec:problem}

\subsection{Markov Decision Process}
\label{sec:problem_markov}

A Markov Decision Process (MDP) is a tuple $\mathcal{M} = (S, A, \mathcal{T}, \mathcal{T}_0, r, \gamma)$, where: $S$ is the set of all states $s$; $A$ is the set of all actions $a$ available to the agent; $\mathcal{T}: S \times A \rightarrow \Delta(S)$ is the transition function that specifies the distribution of next states, where $\Delta$ denotes a set of discrete probability distributions; $\mathcal{T}_0 = \Delta(S)$ specifies the distribution of initial states $s_0$; $r: S \times A \rightarrow \mathcal{P}(\mathbb{R})$ is the reward function that specifies the distribution of rewards, where $\mathcal{P}$ is a set of continuous probability distributions; $\gamma \in (0, 1)$ is the discounting factor.

Let $\pi: S \rightarrow \Delta(A)$ be a policy. Given $s \in S$, $\pi(s, a)$ denotes the probability assigned to action $a$. A transition is a tuple $(s, a, \tilde{r}, s')$, where $a \sim \pi(s)$, $\tilde{r} \sim r(s,a)$, and $s' \sim \mathcal{T}(s, a)$. An episode is a sequence of transitions, in which each transition corresponds to a time step $t = 0, 1, \dots, T$. The episode starts at time step $t = 0$ and progresses until the terminal time step $T$, which marks the horizon.

$R_t = \overset{T}{\underset{l=t}{\sum}} \left[ \gamma^{l-t} \tilde{r}_l  \right]$ is a return of an episode at time step $t$. The value function $V^\pi: S \rightarrow \mathbb{R}$ is defined as $V^\pi(s) \equiv V(s \mid \pi) = \mathbb{E} [ R_t \mid s_t = s, \pi ]$. The objective is to find the policy that maximizes the value function in all states:

\begin{equation}
\argmax_{\pi} \mathbb{E}_{\mathcal{T}, \mathcal{T}_0} V^\pi(s) = \argmax_{\pi} \mathbb{E}_{s_0 \sim \mathcal{T}_0} V^\pi(s_0).
\end{equation}

\subsection{Represented Markov Decision Process}
\label{sec:problem_represented}

We define a represented MDP (r-MDP) as a tuple $\mathcal{M}_r = (S, A, \mathcal{T}, \mathcal{T}_0, \gamma, N, K, (r^i)_{i \in N})$, where: $S, A, \mathcal{T}, \mathcal{T}_0, \gamma$ are defined above; $N$ is the set of agents, where $\left| N \right| = n$; $K$ is the set of representatives, where $\left| K \right| = k < n$ is the budget of policies; $r^i: S \times A \rightarrow \mathcal{P}(\mathbb{R})$ is a reward function of $i \in N$.

In r-MDPs, agents do not interact with the environment directly. Instead, each agent $i \in N$ is represented by one of $k$ representatives $j \in K$ that acts for them. Denote $\pi^j: S \rightarrow \Delta(A)$ as the $j$-th representative policy; and $\alpha^i \in \Delta(K)$ as the $i$-th agent's assignment, with $\alpha^i(j)$ denoting the probability of agent $i$ being represented by $j$. The objective in r-MDP is to both match agents with representatives and train the representative policies such that the utilitarian social welfare of all agents is maximized:

\begin{equation}\label{eq:objective_joint}
    \max_{(\alpha^i)_{i \in N}, (\pi^{j})_{j \in K}} \sum_{i, j} \alpha^i(j) \mathbb{E}_{\mathcal{T}_0} V^{ij}(s_0),
\end{equation}

\noindent where $V^{ij}(s) = V^i(s \mid \pi^j) = \mathbb{E} [ R_t^i \mid s_t = s, \pi^j ]$ is the value function of agent $i$ assigned to representative $j$.

Note that representatives are an abstraction to distinguish the actors in the environment and the agents. In particular, representatives do not have intrinsic reward functions and maximize the assigned agents' welfare. Each representative effectively interacts with its copy of the environment with identical dynamics but different reward functions (see the definition of $M^j$ in Section \ref{sec:our_factorized}). 

\paragraph{Applications} The development of represented Markov Decision Processes (r-MDPs) primarily addresses the challenge of designing personalized ML solutions subject to rigorous regulatory assessments, as highlighted in the introduction. Beyond this primary motivation, r-MDPs hold broader applicability in scenarios where solution quantity is constrained.

Take, for instance, a financial institution formulating portfolios for multiple Exchange-Traded Funds (ETFs). The institution aims to cater to a diverse range of investor preferences, such as risk tolerance, asset types, and market exposure, informed by market data or surveys. However, offering a unique portfolio to each investor is impractical, necessitating a compromise on the number of ETFs. This scenario is algorithmically solved for one-dimensional preferences \cite{diana2021algorithms}. In more complex, multi-dimensional cases, r-MDPs offer a viable modeling approach. By leveraging our r-MDP framework and the associated algorithms, the financial institution can optimally balance the diversity of ETF offerings with the practical limitations on the number of available portfolios, demonstrating the adaptability and utility of r-MDPs in varied contexts beyond regulatory constraints.

\paragraph{Limitations} Optimizing utilitarian social welfare may result in unfair reward distributions between agents. Egalitarian or Nash-product social welfare may be more reasonable in applications where this is a concern. To this end, the techniques from socially fair RL \cite{mandal2022socially} and clustering \cite{kar2023feature} could potentially be adapted. However, this direction is out of the scope of this paper.

\subsection{Proximal Policy Optimization}
\label{sec:problem_proximal}

PPO \cite{schulman2017proximal} is a deep RL algorithm based on the prominent Actor-Critic framework. 

The critic is a neural network $\phi$ that parameterizes an approximation of the agent's value function $\tilde{V}(s)$. It is trained with gradient descent to minimize the mean squared difference with a target value:

\begin{equation}\label{eq:loss_ppo_critic}
    L(\phi) = \sum_{t \in B} \left[\tilde{A}_\phi(s_t, a_t) = (y(s_t, a_t) - \tilde{V}_\phi(s_t)) \right]^2,
\end{equation}

\noindent where $L$ denotes a loss function, $B$ denotes a batch of transitions, $y$ denotes a target value, and $\tilde{A}_\phi(s_t, a_t)$ denotes an approximation of the advantage, which we estimate using generalized advantage estimation \cite{schulman2015high}. 

The actor is a neural network $\theta$ that parameterizes the policy $\pi$. It is trained to minimize the negated clipped surrogate objective:

\begin{equation}\label{eq:loss_ppo_actor_short}
    L(\theta) = - \sum_t \clip(\rho_\theta(s_t, a_t)) \tilde{A}(s_t, a_t).
\end{equation}

\noindent where $\rho_\theta(s, a) = \frac{\pi_\theta(s, a)}{\pi_{old}(s, a)}$ is a ratio between the policy that is being optimized and the policy that collected the experience, and $\clip(\cdot)$ truncates the argument according to a specific rule that we report in the Appendix. Repeatedly updating on this objective using a batch of experience divided into smaller mini-batches approximates a policy update within a trust region \cite{schulman2015trust}.

Multiple technical details can make or break an implementation of PPO. For our experiments, we relied on \cite{shengyi2022the37implementation} and were able to replicate the performance reported in the original PPO paper.

\section{Our Approach and Algorithms}
\label{sec:our}

In this section, we describe our factorized approach to r-MDPs and propose two factorized deep RL algorithms.

\subsection{Factorized Approach}
\label{sec:our_factorized}

Directly optimizing (\ref{eq:objective_joint}) involves finding the optimal joint assignment from a set exponential in the number of agents. Even if restricted to discrete assignments, the cardinality of this set is $K^n$, making the problem intractable for large $n$.

Consider a simplification of the joint objective (\ref{eq:objective_joint}) where the policies $\pi^j$ are fixed for all $j \in K$. Then, maximizing it reduces to independently solving a set of trivial problems:

\begin{equation}\label{eq:objective_assignment}
    \max_{\alpha^i} \left[ V^i =  \sum_j \alpha^i(j) \mathbb{E}_{\mathcal{T}_0} V^{ij}(s_0) \right],
\end{equation}

The optimal solution is to greedily assign agent $i$ to the best-performing representative $j^*$ (assuming its uniqueness):

\begin{equation}\label{eq:optimal_assignment}
    \alpha^i(j^*) = 1 \iff j^* = \argmax_j Q^i(j),
\end{equation}

\noindent where $Q^i(j) = \mathbb{E}_{\mathcal{T}_0} V^{ij}(s_0)$ is the Q-value of assigning $i$ to $j$. Ties can be broken arbitrarily. This quantity can be empirically approximated with Monte-Carlo sampling for each $(i, j)$ as an average welfare over several episodes.

Consider another simplification of (\ref{eq:objective_joint}) where the assignments $\alpha^i$ are fixed for all $i \in N$. Then, optimizing social welfare over the policies reduces to independently solving a set of MDPs $(\mathcal{M}^j = (S, A, \mathcal{T}, \mathcal{T}_0, r^j, \gamma))_{j \in K}$, where $r^j(s, a) = \sum_i \alpha^i(j) r^i(s, a)$.  The objective in $\mathcal{M}^j$ is:

\begin{equation}\label{eq:objective_policy}
    \max_{\pi^{j}} \Bigl[ \mathbb{E}_{\mathcal{T}_0} V^j(s_0) = \sum_i \alpha^i(j) \mathbb{E}_{\mathcal{T}_0} V^{ij}(s_0) \Bigr],
\end{equation}

\noindent where $V^j(s)$ is the value of policy $\pi^j$ in state $s$.

We define the factorized approach as an independent optimization of objectives (\ref{eq:objective_assignment}) and (\ref{eq:objective_policy}). That is, each assignment is myopically optimized given the current policies, and vice versa. We are interested in designing factorized algorithms that approximate optimal solutions to the joint objective (\ref{eq:objective_joint}).

\subsection{Training the Representatives}\label{sec:our_representatives}

Before describing our algorithms that simultaneously train assignments and policies, we focus on the latter given fixed assignments. As the backbone, we use the PPO algorithm described in Section \ref{sec:problem_proximal}, but note that any other Actor-Critic algorithm would suffice.

Recall that a representative optimizes the expected value function defined in (\ref{eq:objective_policy}). Estimating it requires the summation of values $V^{ij}(s)$ over all agents weighted by the assignment probabilities $\alpha^i(j)$, which change throughout training. Because of this, directly parameterizing $\tilde{V}^j(s)$ with a neural network $\phi^j$ (as a direct application of the Actor-Critic approach would suggest) results in a non-stationary objective for the critic. Instead, we parameterize $\tilde{V}^{ij}(s)$. Specifically, a critic parameterized with $\phi^j$ outputs $n$ values $\tilde{V}_{\phi^j}^{ij}(s)$, representing the welfare of each agent when assigned to $j$. Each output is trained to minimize the loss function (\ref{eq:loss_ppo_critic}) given rewards sampled from $r^i(s, a)$ and actions sampled from $\pi^j(s)$: 

\begin{equation}\label{eq:loss_our_critic}
    L(\phi^j) = \sum_{i, t} \left( \tilde{A}_{\phi^j}^{ij}(s_t, a_t) \right)^2,
\end{equation}

\noindent where $\tilde{A}_{\phi^j}^{ij}(s_t, a_t) = (y^{ij}(s_t, a_t) - \tilde{V}^{ij}_{\phi^j}(s_t))$. The marginal advantage $\tilde{A}_{\phi^j}^j(s, a) = \sum_i \alpha^i(j) \tilde{A}_{\phi^j}^{ij}(s, a)$ is estimated according to the current assignments. Then, an actor $\theta^j$ that parameterizes $\pi^j$ can be trained on the objective (\ref{eq:loss_ppo_actor_short}):

\begin{equation}\label{eq:loss_our_actor}
    L(\theta^j) = - \sum_t \clip(\rho_{\theta^j}^j(s_t, a_t)) \tilde{A}_{\phi^j}^j(s, a).
\end{equation}

\noindent To improve training efficiency, we share the parameters of intermediate layers between actors $\theta^j$, as well as critics $\phi^j$.

Note that training the policies requires the experiences of all representatives acting for all agents. However, since the dynamics are identical, performing one transition with a representative can be used to sample rewards for all agents.

\subsection{Hard Assignment via EM-like Learning}\label{sec:our_hard}

Our EM-like algorithm is inspired by the classic K-means \cite{macqueen1967classification,lloyd1982least} and Expectation-Maximization (EM) \cite{dempster1977maximum} clustering algorithms. It alternates between two steps. At the E-step, agents are assigned to representatives in analogy to points being assigned to clusters. At the M-step, representatives' policies are improved given the assignments of agents in analogy to cluster centers being improved given the assignments of points.

We maintain an $n \times k$ table $\tilde{Q}$, each element of which $\tilde{Q}^{ij}$ approximates the corresponding Q-value $Q^i(j)$ (defined in Section \ref{sec:our_factorized}). Before performing the E-step, the elements of table $\tilde{Q}$ are updated as moving averages:

\begin{equation}\label{eq:hard_Q_table}
    \tilde{Q}^{ij} \leftarrow (1 - \lambda) \tilde{Q}^{ij} + \lambda [R_0^i \sim \pi^j],
\end{equation}

\noindent where $\lambda \in (0, 1]$ is a mixing coefficient. Technically, for each assignment $\alpha^i$, this update rule is Q-learning with learning rate $\lambda$ in a stateless environment, albeit non-stationary since policies change over time. At the E-step, each agent is greedily reassigned to a best-performing representative, approximating (\ref{eq:optimal_assignment}):

\begin{equation}\label{eq:hard_E_step}
    \alpha^i(j^*) = 1 \iff j^* = \argmax_j \tilde{Q}^{ij}.
\end{equation}

At the M-step, we update policies with PPO as described in Section \ref{sec:our_representatives}. A crucial trade-off is that of the frequency of E-steps and the magnitude of M-steps. We found it best to perform an E-step as frequently as possible, resulting in an M-step that corresponds to a single PPO update per policy.

Similarly to K-means, our algorithm can be proven to converge to a local optimum. We formulate this as a theorem:

\begin{theorem}\label{theorem}
    Given an r-MDP, the EM-like algorithm converges to a local maximum of utilitarian social welfare. 
\end{theorem}

\noindent The proof is provided in the Appendix. Assuming that the M-step is performed until convergence with an (RL) algorithm with global convergence guarantees, we show that both the E-step and M-step monotonically improve social welfare.

\subsection{Soft Assignment via End-to-End Learning}\label{sec:our_soft}

Our second algorithm is based on an observation that the loss function of a representative policy (\ref{eq:loss_our_actor}) is differentiable with respect to the assignment probabilities $\alpha^i(j)$. We leverage this by parameterizing assignments $\alpha^i$ for all agents with $\psi$ and updating the parameters to minimize the same loss function as the policies. The resulting loss function for $\psi$ is:


\begin{equation}\label{eq:soft_loss}
    L(\psi) = - \sum_{j, t} \clip(\rho_{\theta^j}^j(s_t, a_t)) \sum_i \alpha_{\psi}^i(j) \tilde{A}_{\phi^j}^{ij}(s_t, a_t).
\end{equation}

This parameterization is implemented as an $n \times k$ table $\psi$ of logits that are transformed into $\alpha_\psi^i(j)$ by applying column-wise softmax function so that $\sum_j \alpha_\psi^i(j) = 1$. These logits can be updated in the same backward pass as the actors $\theta^j$ since they share the loss function.

The intuition behind this update rule is that the probability $\alpha_\psi^i(j)$ only increases for the best-performing representative, i.e., such $j$ that maximizes the advantage $\tilde{A}^{ij}(s, a)$ averaged over the mini-batch. Effectively, this is a relaxation of the hard re-assignment (\ref{eq:hard_E_step}) of our EM-like algorithm.

\section{Experiments}
\label{sec:experiments}

As we move into the experimental phase of our study, we first describe the environments selected for testing our algorithms, as well as the baselines used for comparison. This is followed by an in-depth analysis of the experimental results, demonstrating the performance of our algorithms in diverse scenarios. Through this, we aim to substantiate the theoretical aspects of our work with empirical evidence, highlighting the strengths and limitations of our approach.\footnote{Code: \url{https://github.com/dimonenka/RL_policy_budget}}

\paragraph{Environments}

To evaluate our algorithms, we employ two distinct types of environments, each serving a specific purpose in our study.

Our initial objective is to scrutinize the behavior of our algorithms in a controlled, simplified setting. We use the Resource Gathering environment adapted from \cite{barrett2008learning,Alegre+2022bnaic}, where a policy directs a character in a 5x5 grid world to collect resources. In our r-MDP modification, each of the $n=25$ unique agents is assigned a specific resource tile. The goal is to collect these resources efficiently, with the episode ending when the character returns to the starting tile. The agents’ rewards for collecting the respective resources, calculated as $r = 100 - T$, incentivize quick resource collection and require optimal pathfinding. In this scenario, we explore policy budgets ranging from $k \in \{1,2,3,5,10,25\}$, examining how the number of policies affects efficiency and agent satisfaction.

To rigorously test our algorithms in more complex scenarios, we employ MuJoCo environments \cite{todorov2012mujoco,tassa2018deepmind,tunyasuvunakool2020}, including HalfCheetah, Ant, Hopper, and Walker2d. These tasks involve controlling robots with continuous actions in high-dimensional states. Each episode lasts for $1000$ time steps or until the robot falls. To adapt these environments as r-MDPs, we define $n \in \{100, 1000\}$ agents, and for each agent, uniformly sample a target velocity $v^i \sim U[0, b]$, where $b$ is selected as $2.5$ for Walker2d and Hopper, $3$ for Ant, and $4$ for HalfCheetah. This is inspired by the meta-RL literature, where each sampled velocity is treated as a different task \cite{finn2017model}.  Each time step, the agents are rewarded for the proximity of the robot to their target velocities according to the reward function $r^i(s_t, a_t) = 1 - \min(1, 20 \cdot \left| v^i - v_{t+1} \right| / b)$. The rewards are normalized s.t. the cumulative reward over an episode equals $100$ for an agent when its target speed is maintained perfectly. Note that the reward of a particular agent is non-zero in only a narrow velocity interval, which echoes the costly error scenarios depicted in our introductory examples. We experiment with policy budgets $k \in \{1, 2, 5, 10, 50\}$.

Both environments offer distinct challenges: the Resource Gathering environment tests the algorithms' effectiveness in a discrete, straightforward setting, while the MuJoCo tasks present a more complex and continuous challenge. Together, they comprehensively evaluate our algorithms’ ability to handle diverse agent preferences and policy budget constraints, reflecting the scenarios discussed in our introduction.

\begin{figure}[t]
\begin{center}
    \begin{subfigure}{0.5\linewidth}
        \centering
        \caption{An optimal path, k=1}
        \includegraphics[width=\linewidth]{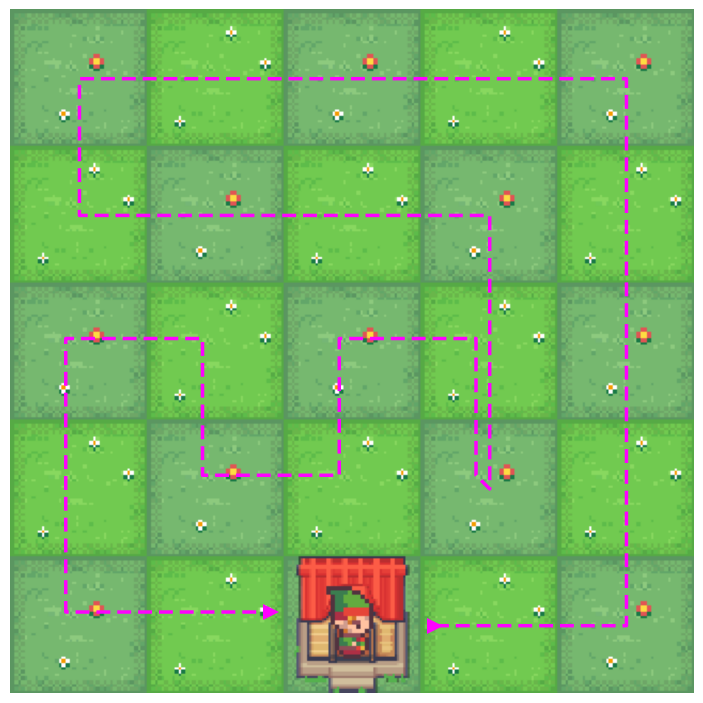}
        \label{fig:results_resource_k=1}
    \end{subfigure}%
    \hfill%
    \begin{subfigure}{0.5\linewidth}
        \centering
        \caption{Paths of EM, k=2}
        \includegraphics[width=\linewidth]{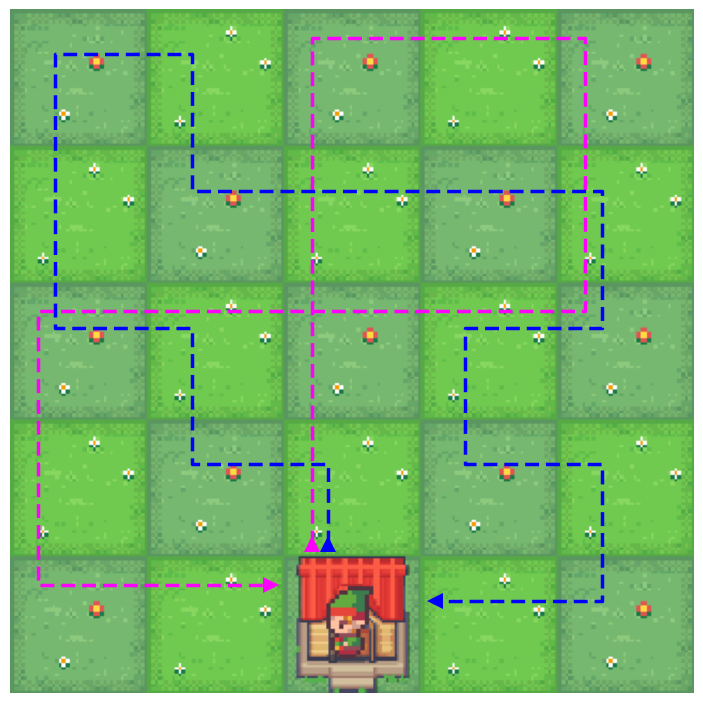}
        \label{fig:results_resource_k=2}
    \end{subfigure}

    \begin{subfigure}{0.5\linewidth}
        \centering
        \caption{Paths of EM, k=3}
        \includegraphics[width=\linewidth]{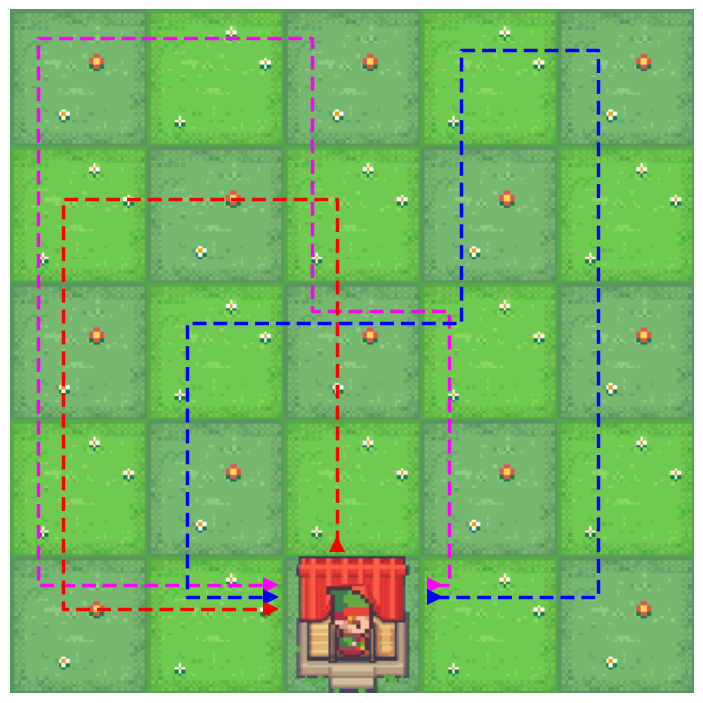}
        \label{fig:results_resource_k=3}
    \end{subfigure}%
    \hfill%
    \begin{subfigure}{0.5\linewidth}
        \centering
        \caption{Paths of EM, k=5}
        \includegraphics[width=\linewidth]{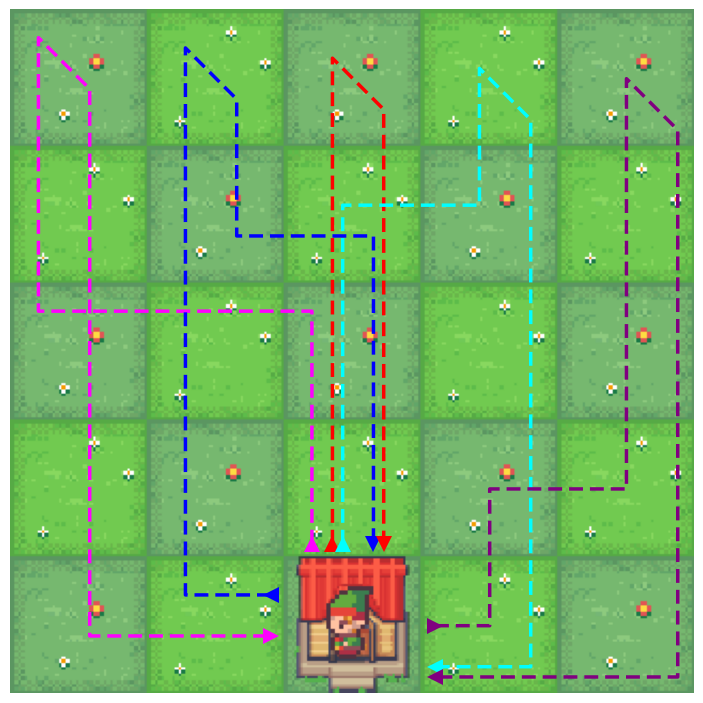}
        \label{fig:results_resource_k=5}
    \end{subfigure}
    
\caption{Paths that representatives learn in Resource Gathering after being trained with our EM algorithm for different $k$ and $n=25$ ($0$-th random seed). The representatives divide the map such that 1) each tile is visited by some policy and 2) policies jointly minimize the average episode length.}
\label{fig:results_resource}
\end{center}
\end{figure}

\begin{figure}[t]
\begin{center}
    \centering
    \includegraphics[width=0.8\linewidth]{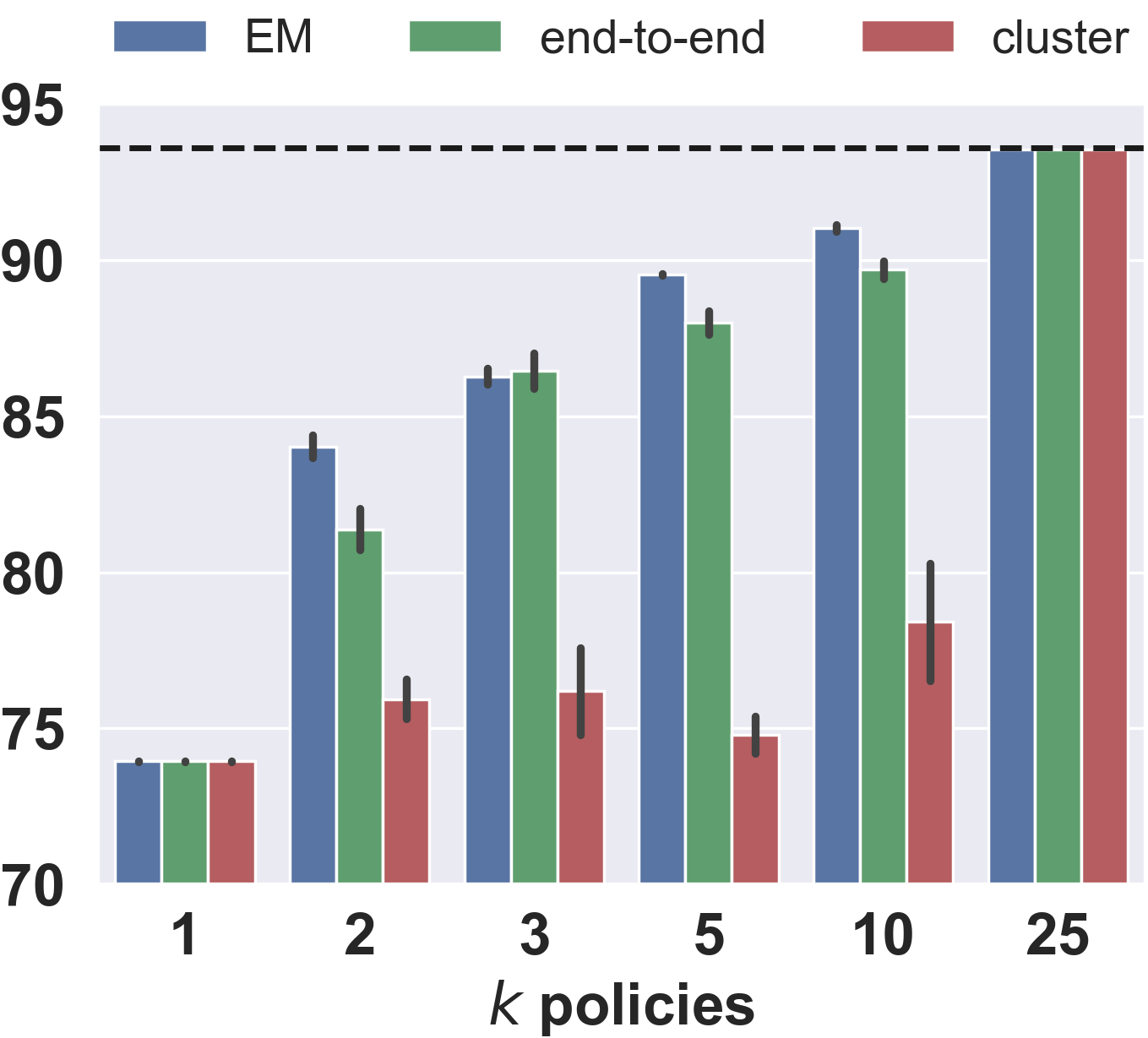}
\caption{Performance of ours and baseline algorithms in Resource Gathering for different $k$ and $n=25$. The black dashed line represents the optimum for $k=25$. For each $k$, all algorithms are trained for 1 million transitions per policy. For $k=1$, all algorithms reduce to solving an MDP with a single policy. Confidence intervals represent standard errors.}
\label{fig:results_resource_sw}
\end{center}
\end{figure}

\paragraph{Algorithms}

In our experiments, we utilize two algorithms developed in this study, referred to as the EM algorithm and the end-to-end algorithm, as detailed in Sections \ref{sec:our_hard} and \ref{sec:our_soft}.

Finding suitable baselines for comparison proved challenging due to the unique constraints of the r-MDP framework, which are not addressed by most methods in related fields. However, we identified a relevant baseline in a clustering-based algorithm used in RL for personalization in healthcare (see Section \ref{sec:introduction_related}). This algorithm typically pre-trains a universal policy for all agents, employs K-Means clustering on sampled trajectories to group agents, and then trains a policy for each cluster. To align this method with our r-MDP setting, where sample efficiency is less of a concern, we extend both the pre-training of the universal policy and the training of cluster-specific policies to approximate convergence.

As a control, we also include a weak baseline where agents are randomly assigned to representatives, with policies subsequently trained without personalization considerations. This serves to benchmark the minimum expected performance and to emphasize the impact of personalized approaches.

To ensure reproducibility and transparency, hyperparameters and technical details are provided in the Appendix. All experiments were conducted ten times to ensure robustness, with standard errors reported alongside mean values.

\subsection{Resource Gathering}
\label{sec:experiments_resource}

\begin{figure*}[t]
\begin{center}

    \centering
    \begin{subfigure}{.55\linewidth}
        \centering
        \includegraphics[width=\linewidth]{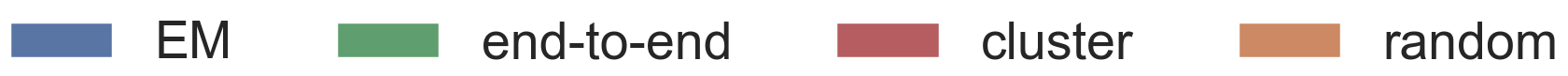}
    \end{subfigure}
    
    \begin{subfigure}{0.245\textwidth}
        \centering
        \caption{Ant, social welfare}
        \includegraphics[width=\linewidth]{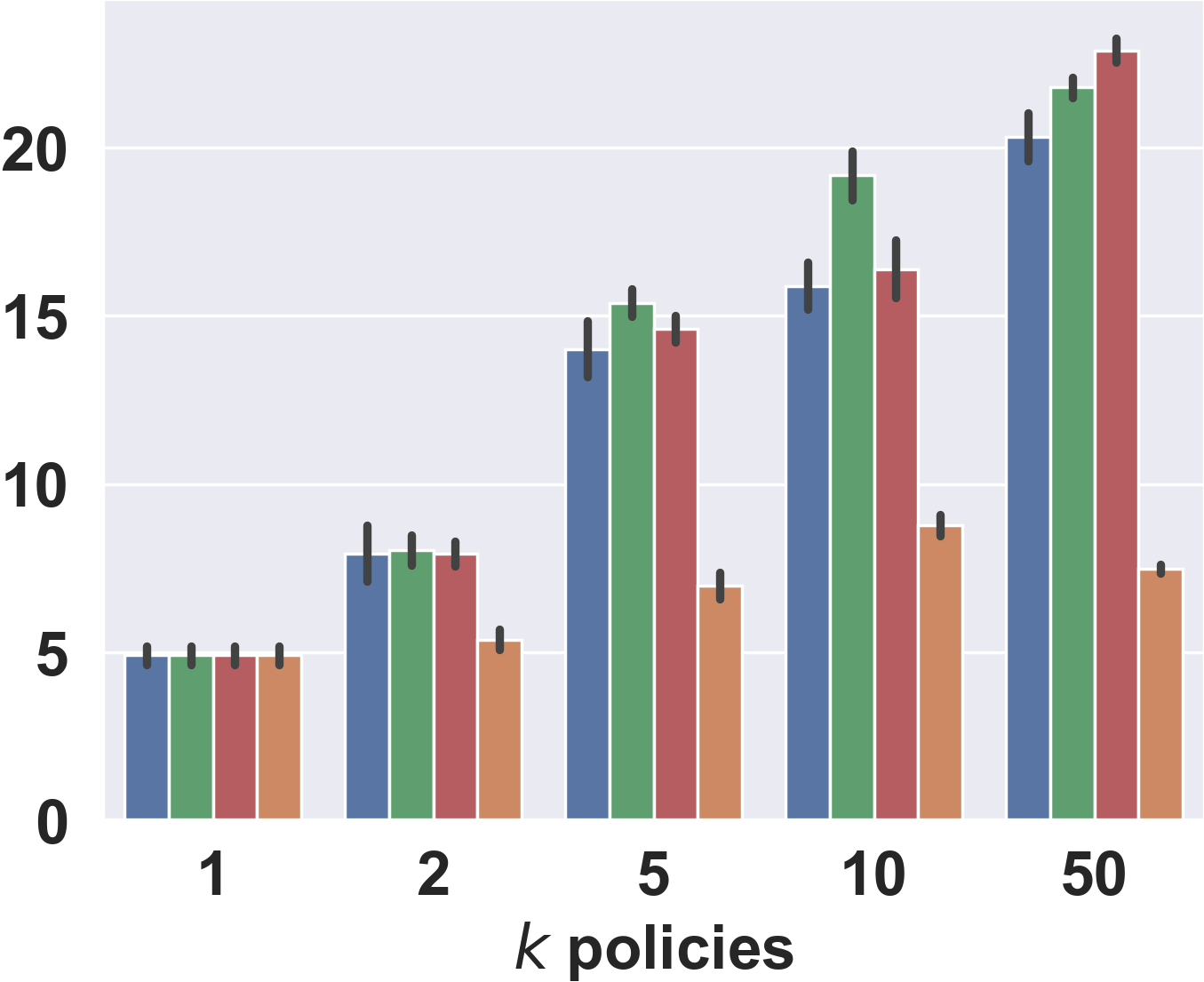}
    \end{subfigure}%
    \hfill%
    \begin{subfigure}{0.245\textwidth}
        \centering
        \caption{HalfCheetah, social welfare}
        \includegraphics[width=\linewidth]{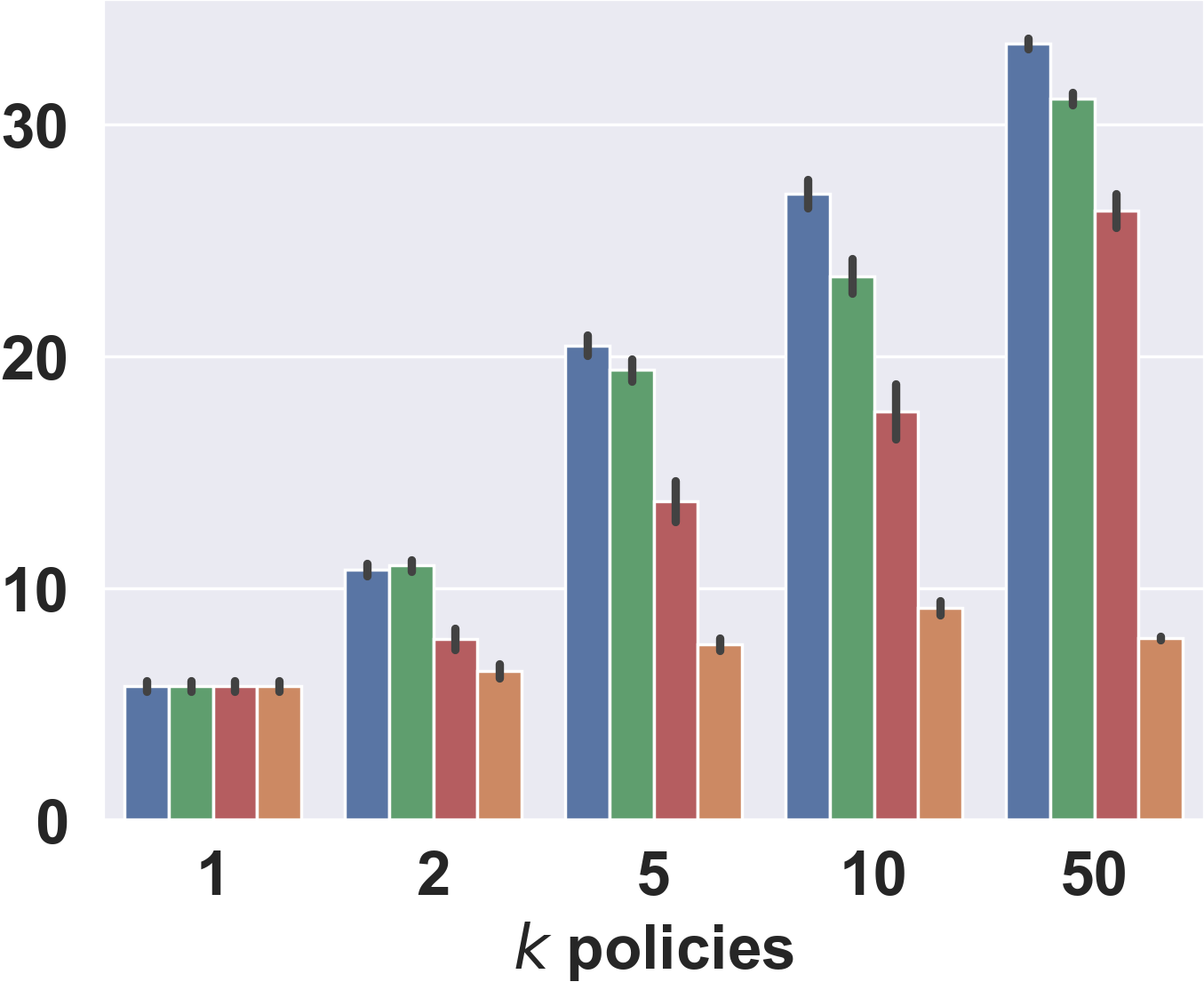}
    \end{subfigure}
    \hfill%
    \begin{subfigure}{0.245\textwidth}
        \centering
        \caption{Hopper, social welfare}
        \includegraphics[width=\linewidth]{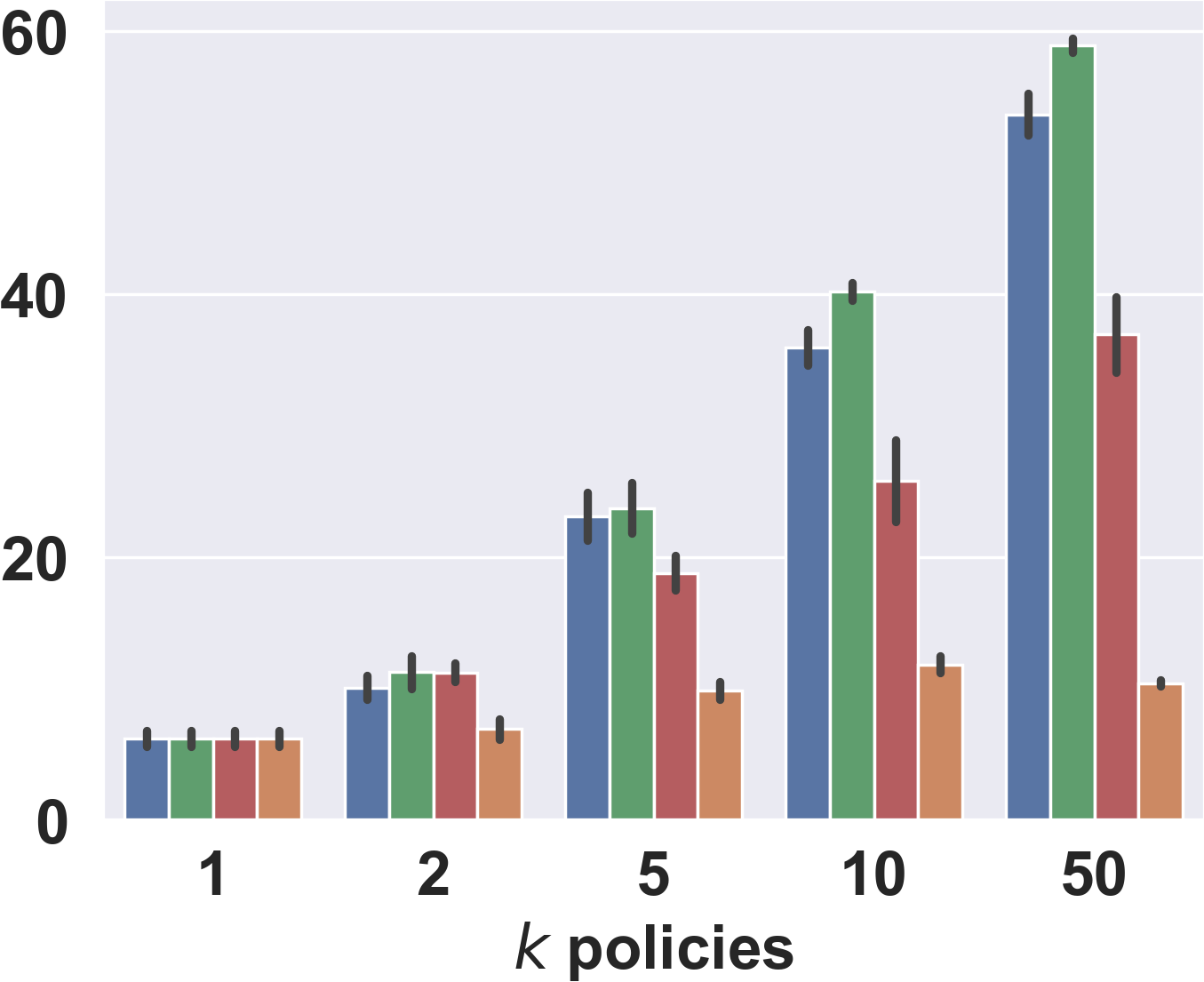}
    \end{subfigure}%
    \hfill%
    \begin{subfigure}{0.245\textwidth}
        \centering
        \caption{Walker2d, social welfare}
        \includegraphics[width=\linewidth]{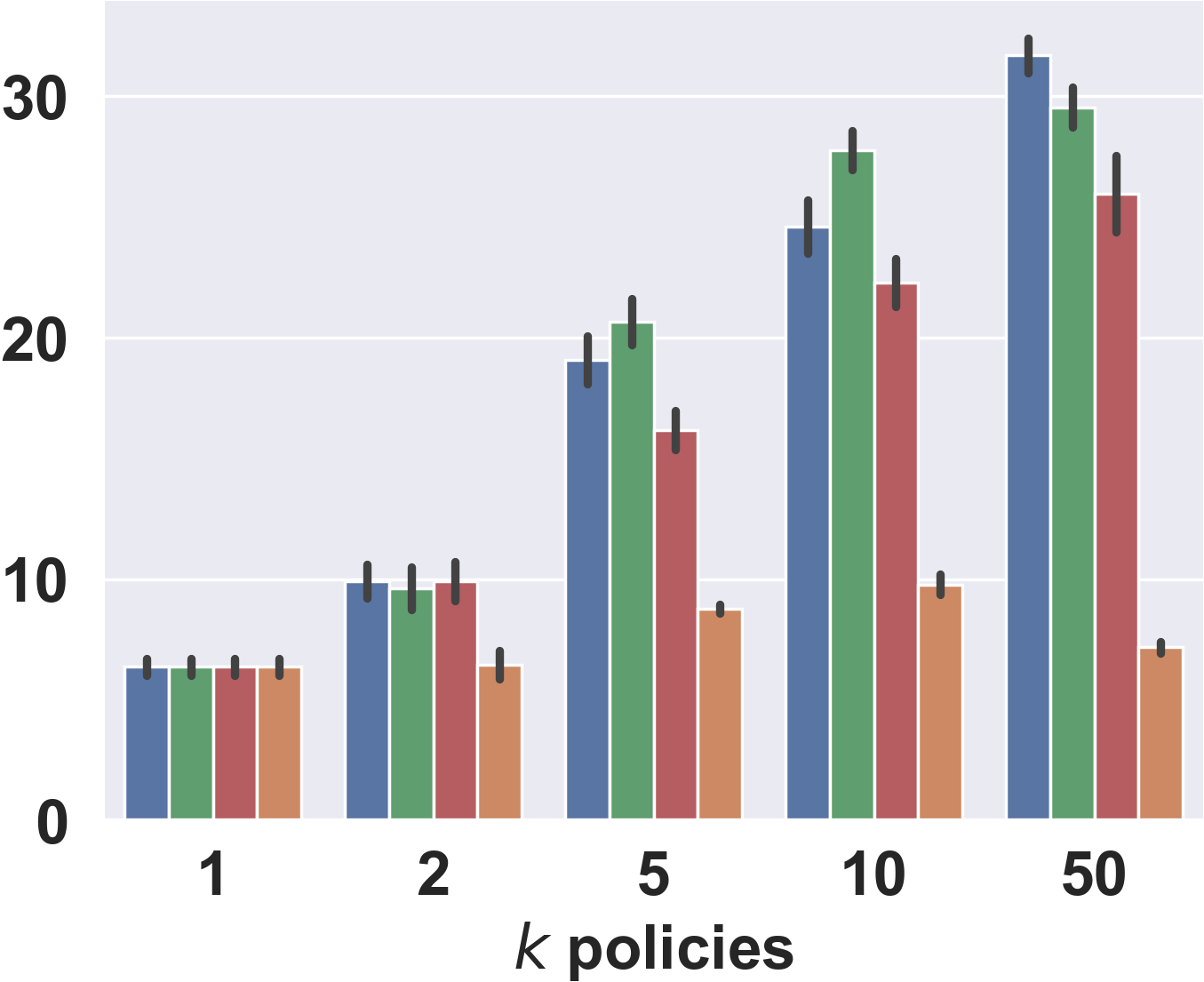}
    \end{subfigure}
    
\caption{Performance of ours and baseline algorithms in MuJoCo environments. For each $k$, all algorithms are trained for 2 million transitions per policy. The number of agents is $n=1000$ for $k=50$ and $n=100$ for smaller $k$. For $k=1$, all algorithms reduce to solving an MDP with a single policy. Confidence intervals represent standard errors.}
\label{fig:results_mujoco}
\end{center}
\end{figure*}

\begin{figure*}[t]
\begin{center}
    \begin{subfigure}{0.3\textwidth}
        \centering
        \caption{HalfCheetah, EM (ours)}
        \includegraphics[width=\linewidth]{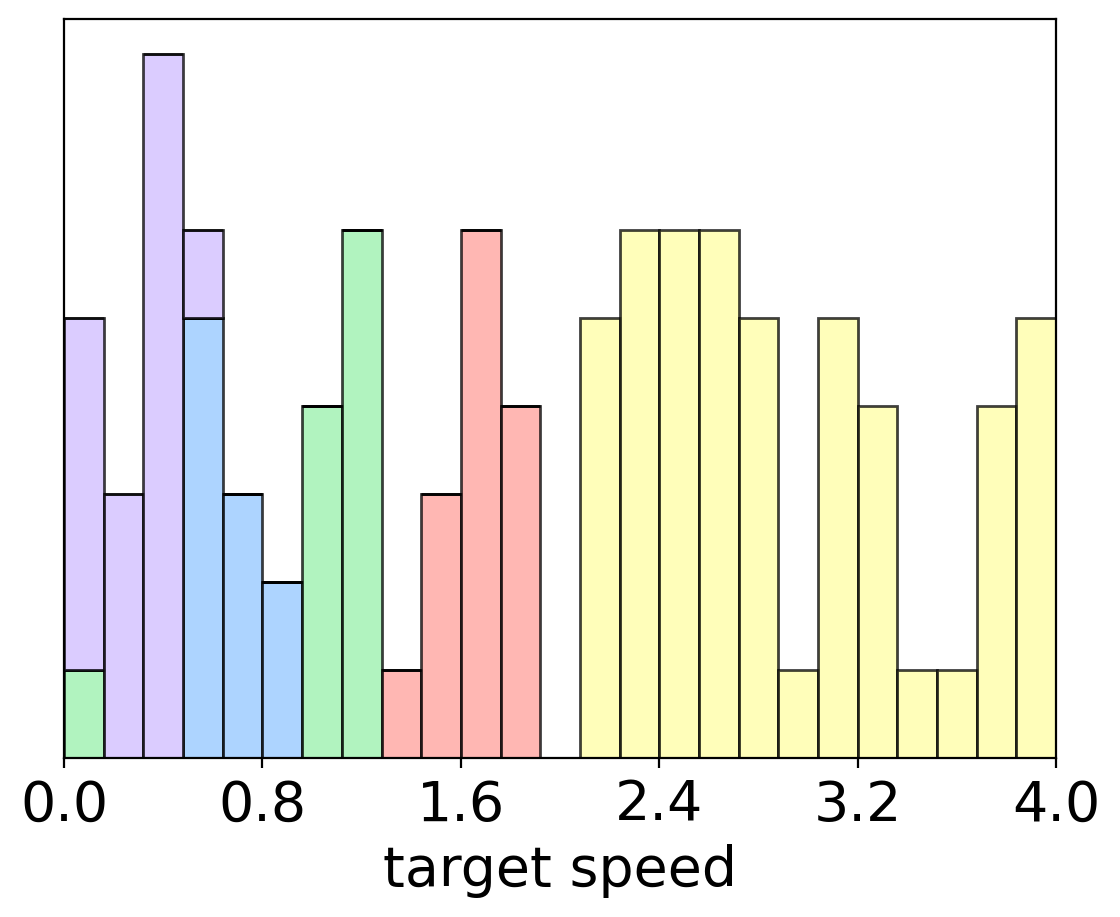}
    \end{subfigure}\hspace{10pt}%
    \begin{subfigure}{0.3\textwidth}
        \centering
        \caption{HalfCheetah, end-to-end (ours)}
        \includegraphics[width=\linewidth]{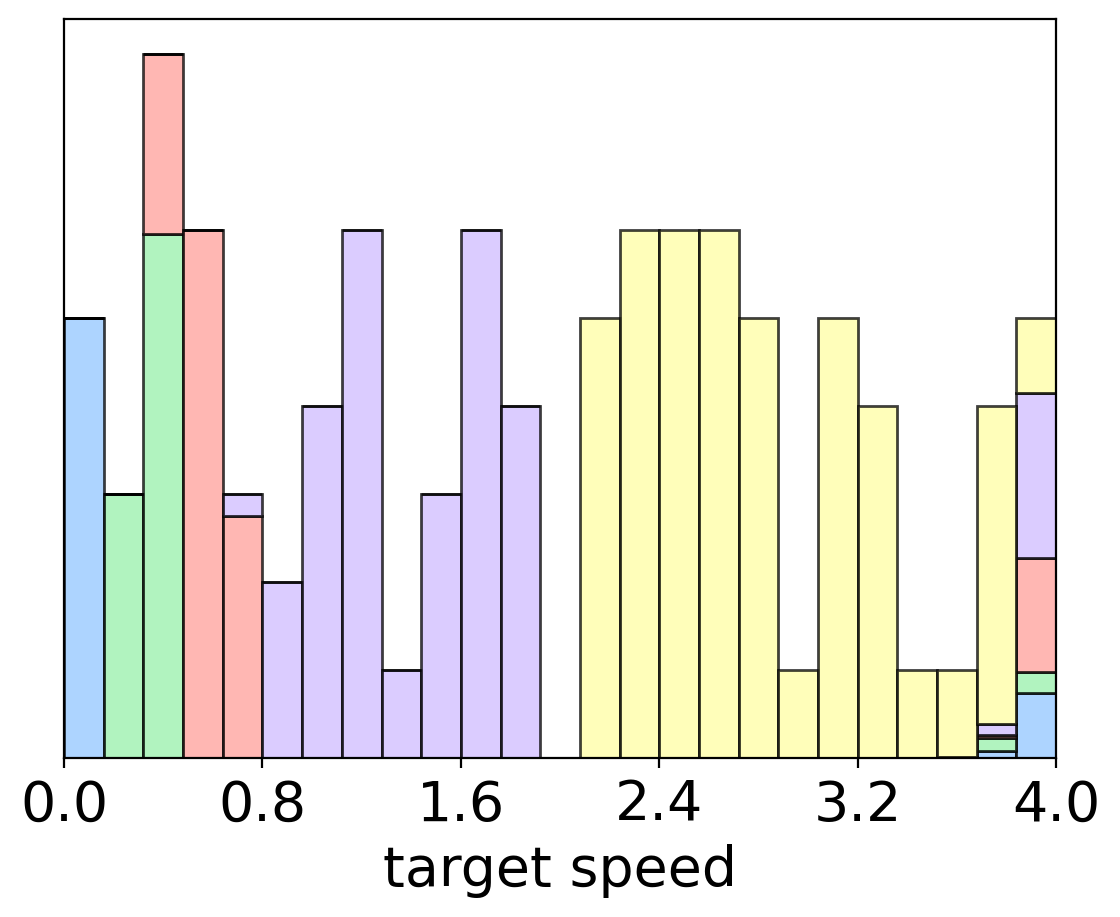}
    \end{subfigure}\hspace{10pt}%
    \begin{subfigure}{0.3\textwidth}
        \centering
        \caption{HalfCheetah, cluster (baseline)}
        \includegraphics[width=\linewidth]{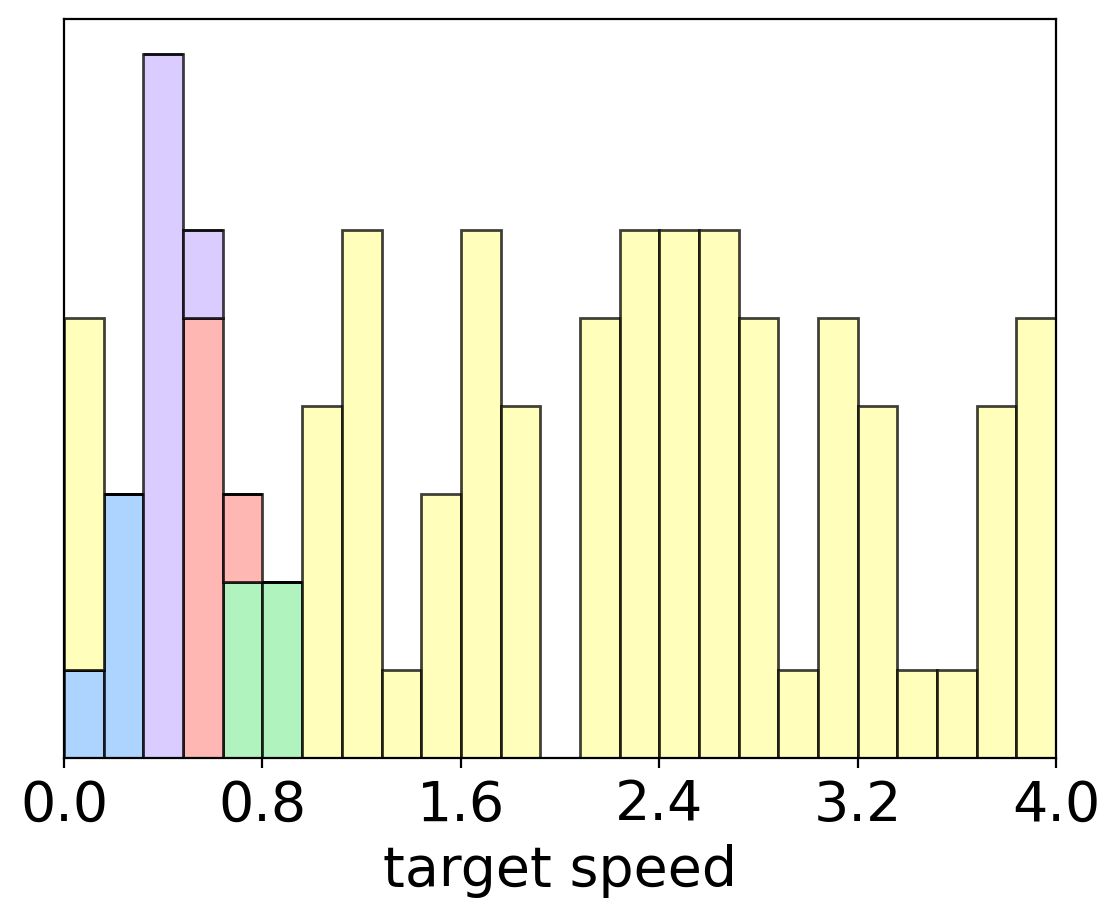}
    \end{subfigure}
\caption{Histograms of agent assignments learned by ours and baseline algorithms for $n=100$, $k=5$ in HalfCheetah ($0$-th random seed). Each color denotes one of five representatives and bars of this color denote the target velocities of agents assigned to this representative. The expected behavior is a division of the agents' velocities into five intervals of similar sizes, one for each representative. Histograms for other environments are reported in the Appendix.}
\label{fig:results_histograms}
\end{center}
\end{figure*}

The results, as depicted in Figure \ref{fig:results_resource_sw}, showcase the effectiveness of our EM and end-to-end algorithms in comparison to the clustering baseline across different values of  $k$. Notably, the performances $k=1$ and $k=n$ are intentionally identical for all algorithms, as these scenarios either involve a single representative for all agents or individual representatives for each, eliminating the need for assignment learning.

For intermediate values of $k$, our algorithms perform similarly. Remarkably, while the optimal social welfare is $93.6$ for $k=25$, our algorithms attain social welfare above $90$ with just $k=10$ policies and above $85$ with as few as $k=3$ policies. This efficiency is illustrated through the representatives' paths in Figures \ref{fig:results_resource_k=2}-\ref{fig:results_resource_k=5}, where they efficiently divide the map to cover smaller areas more rapidly, showcasing our approach's efficacy in achieving meaningful personalization under a strict policy budget. 

In contrast, the clustering baseline exhibits minimal personalization, with its performance remaining largely unchanged regardless of $k$. This limitation is evident when analyzing the unanimous policy (Figure \ref{fig:results_resource_k=1}), which traverses all tiles and thus yields identical rewards for all agents, providing no informative data for effective clustering.

These findings underscore the qualitative superiority of our algorithms over the clustering baseline. Our methods excel by learning assignments that directly optimize social welfare, in contrast to the baseline's reliance on a heuristic unaligned with the primary task. While diversifying the behaviors of the unanimous policy might enhance the baseline's performance, such an approach would still be heuristic and exceed the scope of existing literature, thus not qualifying as a conventional baseline.

\subsection{MuJoCo Environments}
\label{sec:experiments_mujoco}

In the MuJoCo environments, our algorithms consistently outperform the baselines across various policy budgets $k$, as shown in Figure \ref{fig:results_mujoco}. Both the EM and end-to-end algorithms demonstrate high levels of performance and significantly outperform random assignments in all tested scenarios. While the clustering baseline improves upon random assignments as well, it generally falls short of the performance achieved by our algorithms, with the exception of the Ant environment.

A deeper analysis of the assignments learned by the different algorithms (Figure \ref{fig:results_histograms}) reveals intriguing patterns. Both our EM and end-to-end algorithms tend to group agents with similar target velocities and maintain relatively balanced group sizes. This suggests that they effectively identify cutoff points in the latent target velocity space for agent assignment. Notably, the end-to-end algorithm does not rigidly assign agents with the highest target velocities to any single representative, likely due to the absence of a strong enough learning signal from any representative for these high-velocity agents.

In contrast, the clustering baseline primarily segregates agents with low target velocities and lumps the majority into a single cluster. This pattern aligns with the baseline's heuristic nature, which focuses less on optimizing social welfare and more on simplistic clustering based on sampled trajectories.

\section{Conclusion}
\label{sec:conclusion}

In this study, we addressed the significant challenge of personalizing solutions in domains where regulatory assessments impose high costs of implementation. Based on the formalism of represented Markov Decision Processes (r-MDPs), we developed two deep reinforcement learning algorithms and theoretically validated their monotonic convergence to local optima. Empirically, our results underscored the efficacy of these algorithms in delivering meaningful personalization under policy budget constraints.

While our research represents a substantial stride forward, it also opens several avenues for further investigation. An important future direction is the refinement of social welfare functions to integrate fairness more comprehensively, ensuring that personalization does not come at the cost of equity. Additionally, exploring the incorporation of outside options that guarantee a minimal level of welfare for all individuals is crucial. Our current experiments, primarily centered on simulated tasks, set the stage for applying our methodology to real-world scenarios. Extending our approach to practical applications will be instrumental in verifying its effectiveness in diverse settings where personalization is key.

Overall, our work contributes to personalized reinforcement learning by addressing the dual challenges of regulatory compliance and maintaining a high level of personalization. We hope that our framework and algorithms will inspire future research in this domain and facilitate the practical deployment of personalized solutions in various complex and critical environments.

\bibliography{main}

\appendix

\section{Convergence of the EM-like Algorithm}

We use notations from Section 2 of the main text.

The objective is to maximize utilitarian social welfare:

$$SW(\boldsymbol\alpha, \boldsymbol\pi) = \mathbb{E}_{\mathcal{T}_0} \sum_{i, j} \alpha^i(j) V^{ij}(s_0),$$

\noindent where $\boldsymbol\alpha = (\alpha^i)_{i \in N}$ and $\boldsymbol\pi = (\pi^{j})_{j \in K}$.







\begin{lemma}\label{lemma:app}
    A function $V^j$ defined by $V^j(s) = \sum_{i \in N} \alpha^i(j) V^{ij}(s)$ is a value function.
\end{lemma}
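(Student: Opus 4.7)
The plan is to show that $V^j$ is the value function of the policy $\pi^j$ with respect to a suitably weighted reward. The only fact I will use is that each $V^{ij}$ is by definition the value function of policy $\pi^j$ under the single-agent reward $R^i$, i.e.\ it satisfies either the closed form
\[
V^{ij}(s) = \mathbb{E}^{\pi^j}\!\left[\sum_{t=0}^{\infty} \gamma^{t} R^{i}(s_t, a_t) \,\Big|\, s_0 = s\right]
\]
or equivalently the Bellman equation with reward $R^i$ and transition kernel induced by $\pi^j$.

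First, I would fix the policy $\pi^j$ and note that the weights $\alpha^i(j)$ are constants (they depend only on $i$ and $j$, not on the state or the trajectory). Then, by pulling the finite sum $\sum_i \alpha^i(j)(\cdot)$ inside the expectation and the discounted sum, linearity of expectation immediately yields
\[
V^j(s) = \sum_{i \in N} \alpha^i(j) V^{ij}(s) = \mathbb{E}^{\pi^j}\!\left[\sum_{t=0}^{\infty} \gamma^{t} R^{j}(s_t, a_t) \,\Big|\, s_0 = s\right],
\]
where I introduce the aggregated reward function $R^{j}(s,a) := \sum_{i \in N} \alpha^i(j)\, R^i(s,a)$. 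This exhibits $V^j$ as the value function of $\pi^j$ under $R^j$, proving the lemma.

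Equivalently, one can verify the Bellman equation directly: multiply the Bellman equation for $V^{ij}$ by $\alpha^i(j)$ and sum over $i$ to obtain
\[
V^j(s) = \mathbb{E}_{a \sim \pi^j(\cdot \mid s),\, s' \sim P(\cdot \mid s, a)} \bigl[R^j(s,a) + \gamma\, V^j(s')\bigr],
\]
which is the Bellman equation for $V^j$ under reward $R^j$ and policy $\pi^j$. There is essentially no obstacle here: the argument rests entirely on the fact that the weights $\alpha^i(j)$ are state-independent constants, so linearity of expectation commutes with the weighted combination. The only point worth being explicit about in the writeup is that the transition dynamics and the policy $\pi^j$ are shared across all agents, so the expectation operator $\mathbb{E}^{\pi^j}$ is the same for every term in the sum over $i$ — this is what allows the weights to be pulled inside.
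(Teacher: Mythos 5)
Your proposal is correct and follows essentially the same route as the paper's proof: both exploit that the weights $\alpha^i(j)$ are state-independent constants, use linearity of expectation to pull the weighted sum over agents inside the discounted return under the shared policy $\pi^j$, and identify the result as the value function for the aggregated reward $\tilde{r}^j_t = \sum_{i} \alpha^i(j)\, \tilde{r}^i_t$. The only cosmetic differences are that the paper works with a finite horizon $T$ rather than an infinite sum, and it does not include your (equally valid) alternative verification via the Bellman equation.
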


\begin{proof}
    Let $s \in S$.
    
    Apply the definition of $V^{ij}(s)$:
    
    $$
    V^j(s) = \sum_{i \in N} \alpha^i(j) \mathbb{E} \left[ \overset{T}{\underset{t=0}{\sum}} \gamma^{t} \tilde{r}^i_t \mid s_0 = s, \pi^j \right].
    $$

    Rearrange the terms:

    $$
    V^j(s) = \mathbb{E} \left[ \overset{T}{\underset{t=0}{\sum}} \gamma^{t} \sum_{i \in N} \alpha^i(j) \tilde{r}^i_t \mid s_0 = s, \pi^j \right].
    $$

    Substitute $\tilde{r}^j_t = \sum_{i \in N} \alpha^i(j) \tilde{r}^i_t$:

    $$
    V^j(s) = \mathbb{E} \left[ \overset{T}{\underset{t=0}{\sum}} \gamma^{t} \tilde{r}^j_t \mid s_0 = s, \pi^j \right].
    $$

    Observe that $V^j(s)$ is a value function by definition.
\end{proof}


Define the \textit{E-step} as updating the assignments to $\boldsymbol\alpha^*$ given the policies $\boldsymbol\pi$:

\begin{equation}
    \alpha^{i*}(j^*) = 
    \begin{dcases}
        1,& j^* = \argmax_j \mathbb{E}_{\mathcal{T}_0} V^{ij}(s_0)\\
        0,              & \text{otherwise}
    \end{dcases}
\end{equation}

\noindent Note: ties are broken arbitrarily, e.g., lexicographically.

Define the \textit{M-step} as updating the policies to $\boldsymbol\pi^*$ given the assignments $\boldsymbol\alpha$:

\begin{equation}
   \forall \{ j \mid \sum_i \alpha^i(j) > 0 \} : \pi^{j*} = \argmax_{\pi^j} \mathbb{E}_{\mathcal{T}_0} V^j(s_0)
\end{equation}

\noindent Note: if some representative $j$ is not assigned any agents after an E-step (i.e., $\sum_i \alpha^i(j) = 0$), we may assign a random agent to this representative prior to the M-step. After the M-step, the representative will implement the optimal policy for this agent. For the formal proof, this is not required.

By Lemma \ref{lemma:app}, we can use any RL algorithm that has convergence guarantees to perform the M-step for each $j$.

The EM-like meta-algorithm is defined in Algorithm \ref{alg:algorithm}. A specific implementation is discussed in the main text.

\begin{algorithm}[tb]
    \caption{EM-like meta-algorithm}
    \label{alg:algorithm}
    \textbf{Input}: r-MDP $\mathcal{M}_r$
    \begin{algorithmic}[1] 
        \STATE Arbitrarily initialize $(\alpha^i)_{i \in N}$ s.t. $\forall j: \exists i, \alpha^i(j) > 0$
        \WHILE{assignments or policies change}
            \STATE Perform M-step to update policies
            \STATE Perform E-step to update assignments
        \ENDWHILE
    \end{algorithmic}
\end{algorithm}

\begin{theorem}\label{theorem:app}
    Given an r-MDP $\mathcal{M}_r$, the EM-like meta-algorithm converges to a local maximum of $SW(\boldsymbol\alpha, \boldsymbol\pi).$ 
\end{theorem}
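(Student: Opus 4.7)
The plan is to follow the classical EM monotonicity-plus-boundedness template: show that each of the E-step and M-step weakly increases $SW$, use boundedness of $SW$ to obtain convergence of the objective values, and use finiteness of the assignment space to conclude that the loop terminates at a fixed point that is a local maximum in a block-coordinate sense.

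The structural observation driving the proof is that, by Lemma~\ref{lemma:app},
$$SW(\boldsymbol\alpha, \boldsymbol\pi) \;=\; \mathbb{E}_{\mathcal{T}_0} \sum_{j \in K} V^j(s_0),$$
where each $V^j$ is a genuine value function for the scalarized reward $\tilde r^j_t = \sum_i \alpha^i(j) \tilde r^i_t$, and the $j$-th summand depends on the policy vector only through $\pi^j$ and on the assignment only through $(\alpha^i(j))_i$. For the M-step, this decomposition means each subproblem $\pi^{j*} = \argmax_{\pi^j} \mathbb{E}_{\mathcal{T}_0} V^j(s_0)$ can be solved independently and weakly increases every summand, so $SW$ weakly increases. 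For the E-step, I would re-expand $SW = \mathbb{E}_{\mathcal{T}_0} \sum_i \sum_j \alpha^i(j) V^{ij}(s_0)$ as a sum of per-agent contributions and observe that moving all of $\alpha^i$'s mass to $j^*_i = \argmax_j \mathbb{E}_{\mathcal{T}_0} V^{ij}(s_0)$ weakly increases each contribution by definition of $\argmax$. Monotonicity of $SW$ across iterations then follows.

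Because rewards are bounded (so $SW$ is bounded above) and $\boldsymbol\alpha$ ranges over the finite set of one-hot assignments $K^N$, the monotone sequence of $SW$ values converges, and with the deterministic (lexicographic) tie-breaking stated in the excerpt, $\boldsymbol\alpha$ must eventually stabilize. Once $\boldsymbol\alpha$ is fixed, the subsequent M-step leaves $\boldsymbol\pi$ unchanged (the policies are already per-$j$ optima) and the \textbf{while} loop terminates. The terminal state satisfies both first-order conditions simultaneously: no single agent's reassignment can increase $SW$, and no single representative's policy change can either. These are precisely the block-coordinate local-maximality conditions that identify the limit point as a local maximum of $SW(\boldsymbol\alpha, \boldsymbol\pi)$.

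The main obstacle I anticipate is the tie-handling in the E-step: without care, the assignment could oscillate among equally-good representatives, preventing the loop from halting even though the $SW$ sequence has already converged. I would address this by leaning on the lexicographic tie-breaking explicitly stated in the excerpt, which makes each E-step a deterministic function of the current $\boldsymbol\pi$ and yields a clean fixed-point characterization; alternatively, one can argue that any $\boldsymbol\alpha$'s visited infinitely often all induce the same scalarized rewards and hence the same M-step optimum, so the algorithm is effectively confined to a set of equivalent local maxima of $SW$.
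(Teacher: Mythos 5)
Your proposal is correct and follows essentially the same route as the paper's proof: show the E-step and M-step each weakly increase $SW$ (the former by the $\argmax$ property per agent, the latter via the per-representative decomposition from Lemma~\ref{lemma:app}), then conclude from monotonicity and finiteness of the assignment space. Your additional attention to deterministic tie-breaking and the explicit fixed-point/block-coordinate characterization of the limit is a slightly more careful rendering of the paper's closing sentence, not a different argument.
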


\begin{proof}
    Observe that an E-step may not decrease social welfare:

    \begin{equation*}
    \begin{split}
        & SW(\boldsymbol\alpha^*, \boldsymbol\pi) - SW(\boldsymbol\alpha, \boldsymbol\pi) =\\
        & \mathbb{E}_{\mathcal{T}_0} \sum_i \left[ \max_j V^{ij}(s_0) - \sum_j \alpha^i(j) V^{ij}(s_0) \right] \geq 0.
    \end{split}
    \end{equation*}

    Likewise, observe that an M-step may not decrease social welfare:

    \begin{equation*}
    \begin{split}
        & SW(\boldsymbol\alpha, \boldsymbol\pi^*) - SW(\boldsymbol\alpha, \boldsymbol\pi) =\\
        & \mathbb{E}_{\mathcal{T}_0} \sum_{i, j} \left[ \alpha^i(j) \left( V^i(s_0 \mid \pi^{j*}) - V^i(s_0 \mid \pi^j) \right) \right] \geq 0.
    \end{split}
    \end{equation*}

    Therefore, the iterative application of E-step and M-step monotonically increases social welfare until convergence. Because the number of possible assignments is finite, the convergence is guaranteed.
\end{proof}

\section{Hyperparameters and Technical Details}

\paragraph{PPO}

The PPO algorithm updates the policy to minimize the following loss function:

\begin{equation}\label{eq:loss_ppo_actor}
    \begin{split}
    L(\theta) = - \sum_{t \in B} \min[\rho_\theta(s_t, a_t) \tilde{A}(s_t, a_t),& \\  \min(\max(\rho_\theta(s_t, a_t), 1-\epsilon), 1+\epsilon) \tilde{A}(s_t, a_t)]&
    \end{split} 
\end{equation}

Our implementation of PPO is based on the PyTorch package \cite{paszke2019pytorch} for Python 3. We followed the procedure of \cite{shengyi2022the37implementation} to replicate the performance from the original paper \cite{schulman2017proximal}. Specifically, we implemented:

\begin{itemize}
    \item Orthogonal weight initialization;
    \item Generalized advantage estimation \cite{schulman2015high};
    \item Normalization of advantages over the batch (per policy);
    \item Entropy bonus to encourage exploration;
    \item Gradient norm clipping;
    \item Continuous actions via normal distributions plus reparameterization trick;
    \item State-independent log standard deviations as learnable parameters;
    \item Independent action components;
    \item Action clipping;
    \item Normalization and clipping of observations.
\end{itemize}

Actors and critics were both trained with ADAM optimizers \cite{kingma2014adam}. Furthermore, we used hyperparameters standard for MuJoCo environments:

\begin{itemize}
    \item Learning rate initialized at $0.0003$ and annealed throughout the training to $0.0001$;
    \item Entropy loss coefficient of $0.001$;
    \item GAE $\lambda = 0.95$;
    \item One hidden layer with 64 neurons;
    \item Batch of 2048 transitions, divided into mini-batches of 64 transitions;
    \item A batch is used for training for 10 epochs;
    \item PPO clipping parameter $\epsilon = 0.2$;
    \item ADAM $\epsilon = 10^{-5}$.
\end{itemize}

\paragraph{Our algorithms}

For our EM-like algorithm, we used a mixing coefficient $\lambda = 0.05$. For our end-to-end algorithm, we updated $\psi$ with ADAM optimizer with a learning rate of $0.002$ in the same backward passes as the policies $\theta_i$.

\section{Additional Histograms}

Figure \ref{fig:results_histograms_app} reports histograms of assignments learned by ours and baseline algorithms. These echo the conclusions in the main text: our algorithms divide the latent velocity space better than the baseline, and thus provide more personalization.

\begin{figure*}[t]
\begin{center}
    \begin{subfigure}{0.3\textwidth}
        \centering
        \caption{Ant, EM (ours)}
        \includegraphics[width=\linewidth]{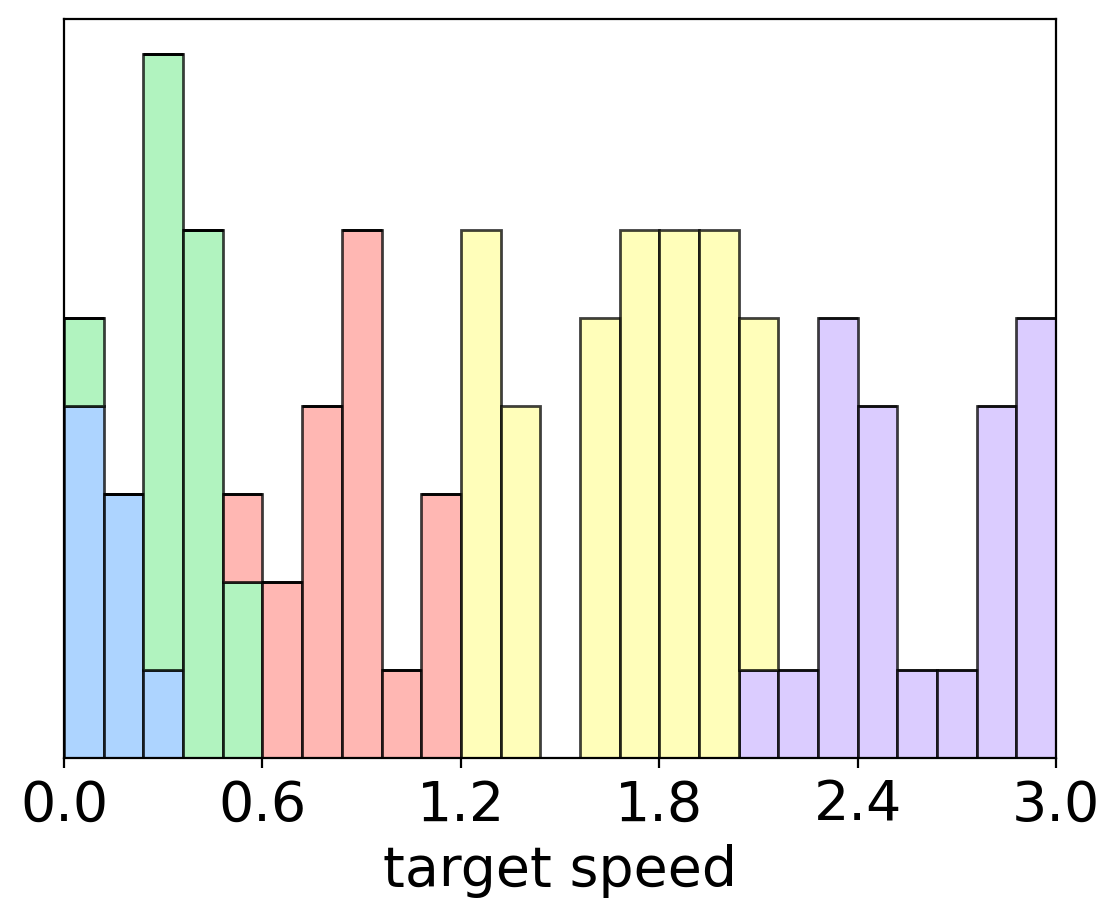}
    \end{subfigure}\hspace{10pt}%
    \begin{subfigure}{0.3\textwidth}
        \centering
        \caption{Ant, end-to-end (ours)}
        \includegraphics[width=\linewidth]{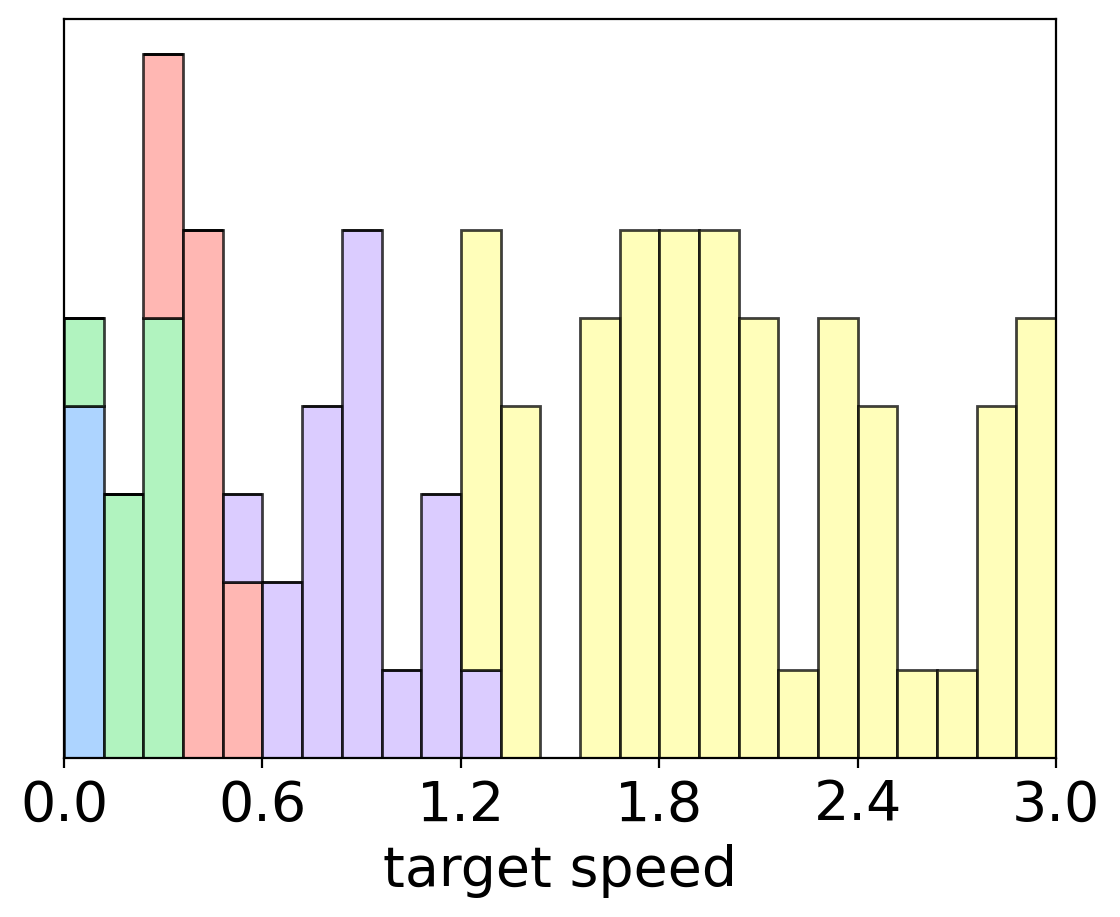}
    \end{subfigure}\hspace{10pt}%
    \begin{subfigure}{0.3\textwidth}
        \centering
        \caption{Ant, cluster (baseline)}
        \includegraphics[width=\linewidth]{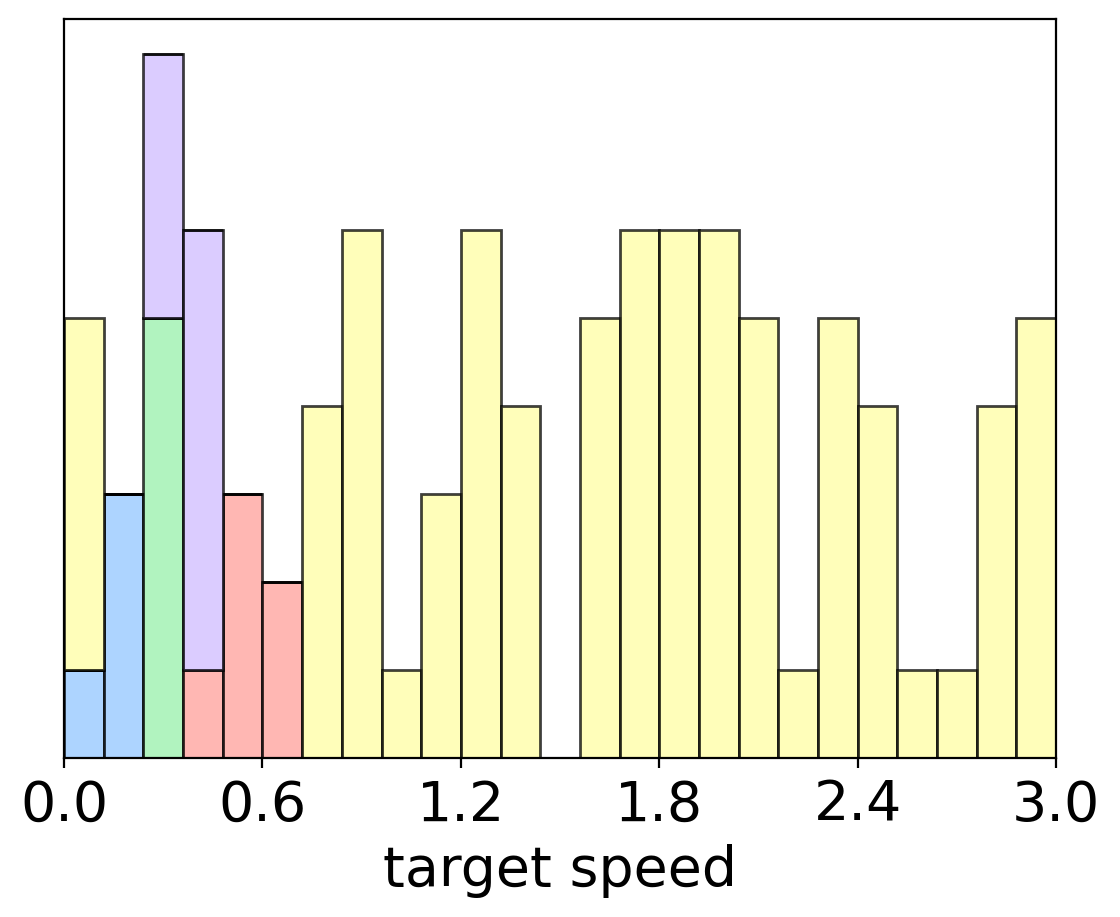}
    \end{subfigure}
    
    \begin{subfigure}{0.3\textwidth}
        \centering
        \caption{Hopper, EM (ours)}
        \includegraphics[width=\linewidth]{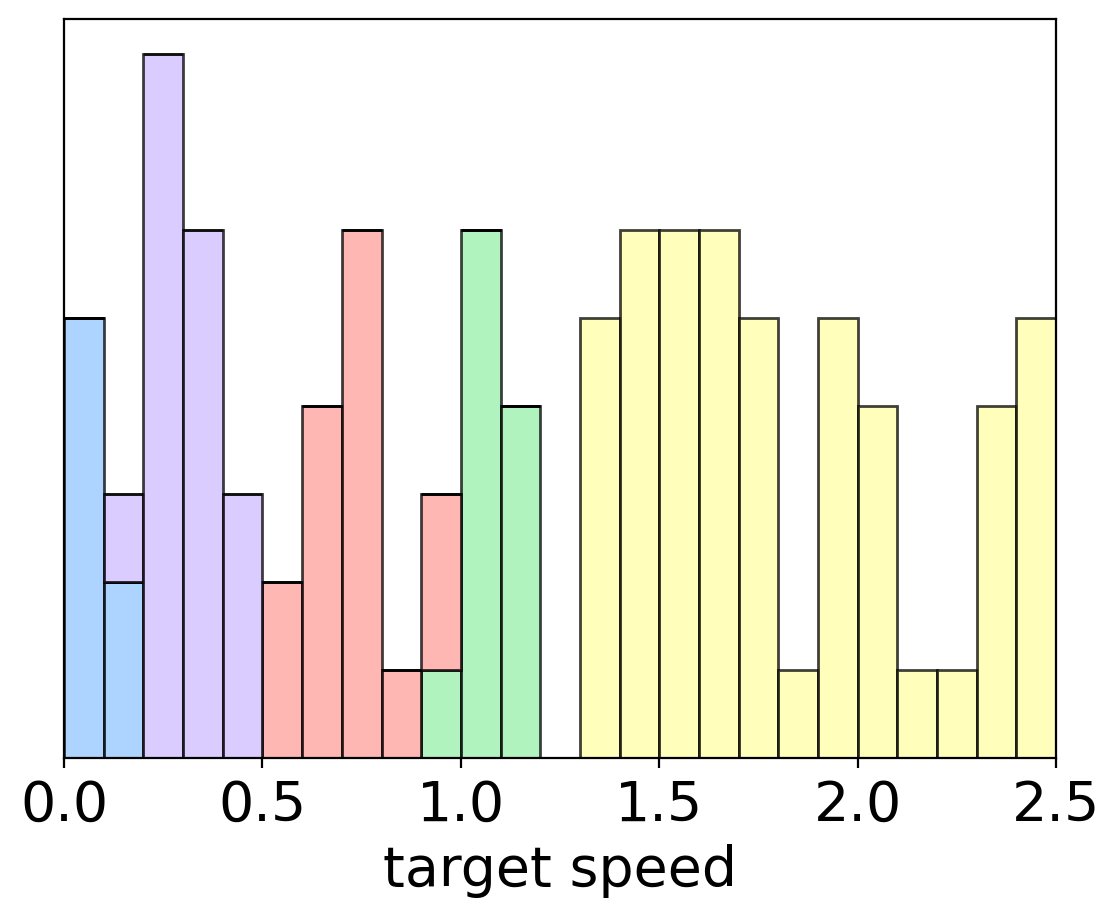}
    \end{subfigure}\hspace{10pt}%
    \begin{subfigure}{0.3\textwidth}
        \centering
        \caption{Hopper, end-to-end (ours)}
        \includegraphics[width=\linewidth]{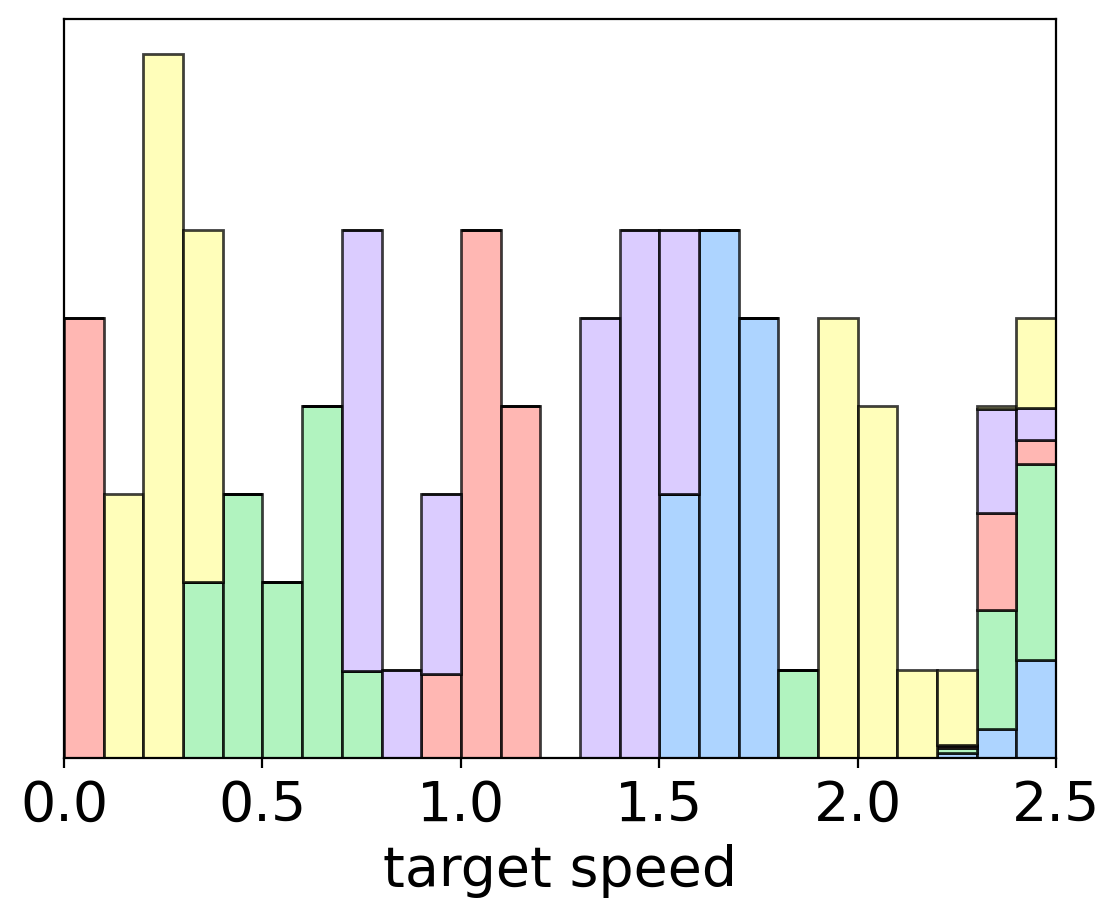}
    \end{subfigure}\hspace{10pt}%
    \begin{subfigure}{0.3\textwidth}
        \centering
        \caption{Hopper, cluster (baseline)}
        \includegraphics[width=\linewidth]{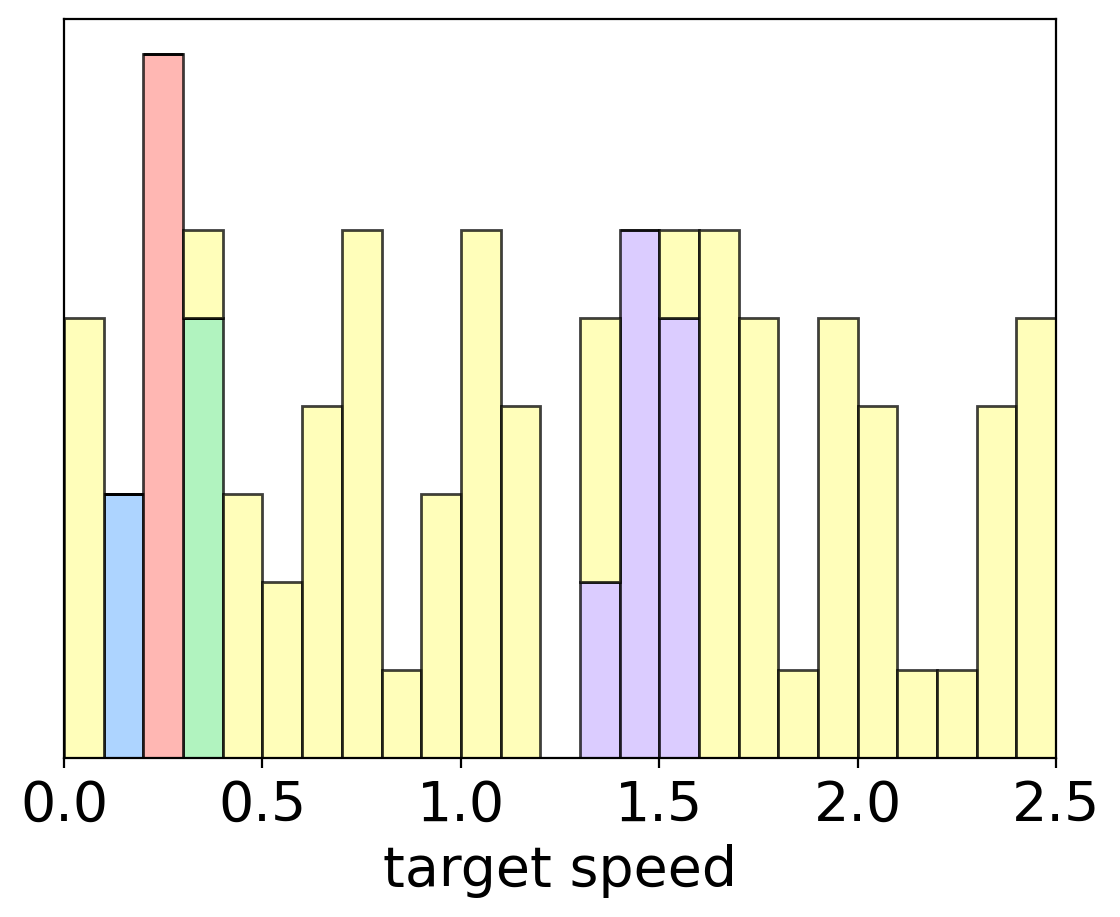}
    \end{subfigure}
    
    \begin{subfigure}{0.3\textwidth}
        \centering
        \caption{Walker2d, EM (ours)}
        \includegraphics[width=\linewidth]{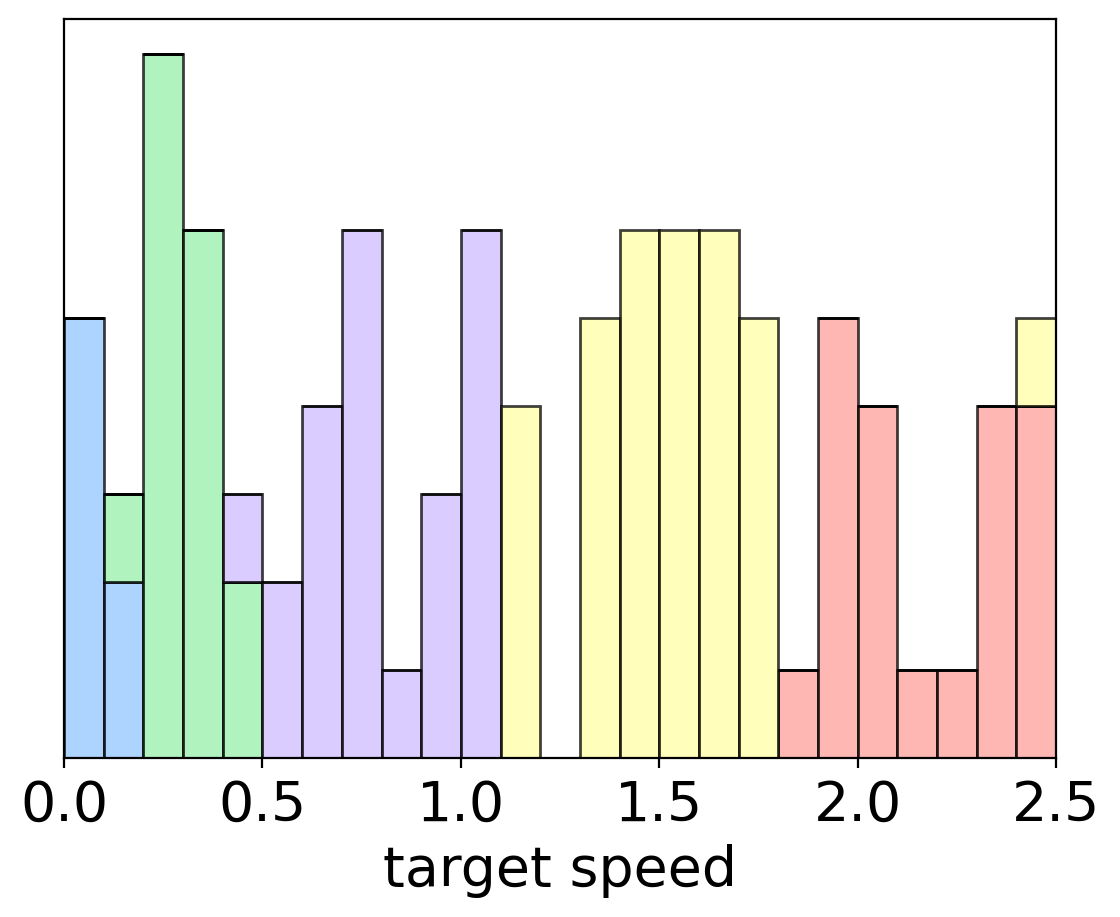}
    \end{subfigure}\hspace{10pt}%
    \begin{subfigure}{0.3\textwidth}
        \centering
        \caption{Walker2d, end-to-end (ours)}
        \includegraphics[width=\linewidth]{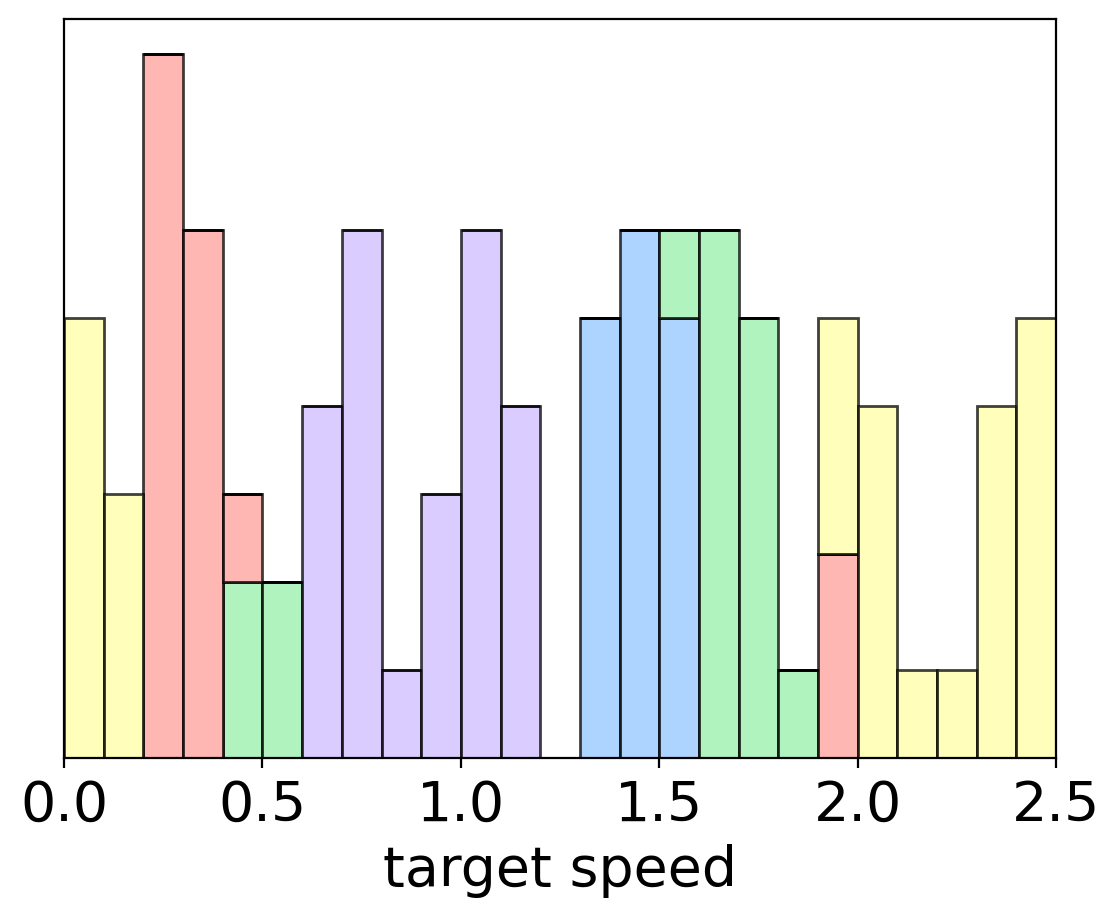}
    \end{subfigure}\hspace{10pt}%
    \begin{subfigure}{0.3\textwidth}
        \centering
        \caption{Walker2d, cluster (baseline)}
        \includegraphics[width=\linewidth]{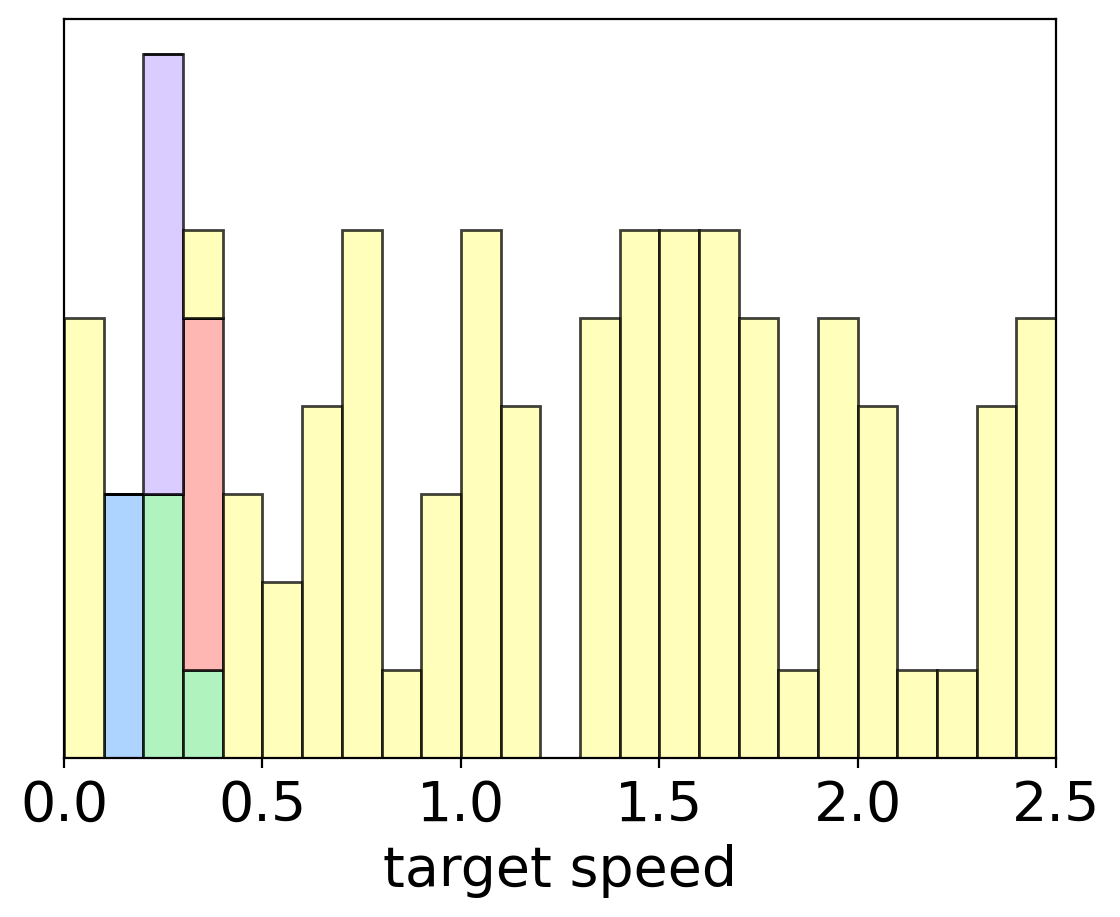}
    \end{subfigure}
\caption{Histograms of agent assignments learned by ours and baseline algorithms for $n=100$, $k=5$ in Ant, Hopper, and Walker2d ($0$-th random seed). Each color denotes one of five representatives and bars of this color denote the target velocities of agents assigned to this representative. The expected behavior is a division of the agents' velocities into five intervals of similar sizes, one for each representative.}
\label{fig:results_histograms_app}
\end{center}
\end{figure*}



\end{document}